%% file: iclr2027_conference.tex
\documentclass{article} % For LaTeX2e
\PassOptionsToPackage{table}{xcolor}
\usepackage{iclr2027_conference,times}

\input{math_commands.tex}

\usepackage{url}
\usepackage[normalem]{ulem}
\usepackage{amsmath}
\usepackage{amssymb}
\usepackage{amsfonts}
\usepackage{bbm}
\usepackage{mathtools}
\usepackage{amsthm}
\usepackage{graphicx}
\usepackage{booktabs}
\usepackage{multirow}
\usepackage{adjustbox}
\usepackage{array}
\usepackage{arydshln}
\usepackage{enumitem}
\usepackage{tcolorbox}
\tcbuselibrary{breakable}
\usepackage{capt-of}
\usepackage{pifont}
\usepackage{algorithm}
\usepackage{algpseudocode}
\usepackage{float}
\usepackage{wrapfig}
\makeatletter
\renewcommand{\theHALG@line}{\thealgorithm.\arabic{ALG@line}}
\makeatother

\newcommand{\ourmethod}{\textsc{POrchestra}}
\newcommand{\aorchestra}{\textsc{AOrchestra}}
\newcommand{\modelname}[1]{\texttt{#1}}
\definecolor{arporow}{HTML}{EAF3FB}
\definecolor{myblue}{HTML}{3B75C3}
\definecolor{mypurple}{HTML}{6A3D9A}
\definecolor{questionbg}{HTML}{EAF3FF}
\definecolor{questionborder}{HTML}{555555}
\definecolor{questionbrain}{HTML}{EFA09B}
\definecolor{questionface}{HTML}{F6C8A8}
\newcolumntype{L}[1]{>{\raggedright\arraybackslash}p{#1}}
\newcolumntype{C}[1]{>{\centering\arraybackslash}p{#1}}

\newtheorem{proposition}{Proposition}

\usepackage{hyperref}
\hypersetup{
    pdftitle={Proactive Agentic Orchestration},
    pdfauthor={Xueyang Feng, Chaozhuo Li, Xiaohe Bo, Jiakai Tang, Chenxi Li, Qiwei Ye, Xu Chen},
    colorlinks=true,
    linkcolor=mypurple,
    citecolor=myblue,
    urlcolor=mypurple,
}
\usepackage[nameinlink,capitalize,noabbrev]{cleveref}

\title{Proactive Agentic Orchestration}

\author{
{\small\bfseries Xueyang Feng\textsuperscript{1}\thanks{Work done during an internship at BAAI.\quad\textsuperscript{\ensuremath{\dagger}}Corresponding author.}, Chaozhuo Li\textsuperscript{4}, Xiaohe Bo\textsuperscript{1}, Jiakai Tang\textsuperscript{1}, Chenxi Li\textsuperscript{3}, Qiwei Ye\textsuperscript{4}, Xu Chen\textsuperscript{1}\textsuperscript{\ensuremath{\dagger}}}\\[0.5em]
\multicolumn{1}{c}{\small\normalfont\textsuperscript{1}\,Gaoling School of Artificial Intelligence, Renmin University of China}\\
{\small\textsuperscript{3}\,The Chinese University of Hong Kong Shenzhen\quad
\textsuperscript{4}\,Beijing Academy of Artificial Intelligence}\\
\multicolumn{1}{c}{\small\normalfont\texttt{\{xueyangfeng, xu.chen\}@ruc.edu.cn}}
}

\iclrfinalcopy % Show authors using the standard ICLR title layout.
\begin{document}
% Hide main-paper entries in the appendix-only contents.
\addtocontents{toc}{\protect\setcounter{tocdepth}{-1}}

\maketitle
\lhead{Preprint} % Neutral header for the arXiv preprint.

\vspace{-2em} % Reduce the gap between the author block and abstract.
\begin{abstract}
Agentic orchestration delegates tasks to sub-agents with independent contexts, providing a key path to extending the effective execution horizon of large language model agents. However, existing orchestration methods face two limitations: in execution, current methods fix sub-agents' goals and harnesses at creation time, limiting their adaptability to dynamic real-world environments; in optimization, sparse and delayed feedback constrains effective learning of the orchestration policy. To address these limitations, we introduce \emph{\ourmethod{}}, an on-the-fly proactive agentic orchestration framework. It allows sub-agents to proactively communicate with the orchestrator at any logical point to request harness adjustments while maintaining continuity of local execution. This mechanism not only enables sub-agents to adapt seamlessly to dynamic environments but also naturally inspires \emph{On-Policy Orchestration Distillation} (OPOD) based on hindsight feedback. OPOD attributes sub-agents' proactive revision requests back to preceding orchestration decisions to derive targeted improvement signals, and leverages these signals as process supervision alongside the final reward to optimize the orchestration policy. We evaluate \ourmethod{} on three long-horizon agent benchmarks spanning deep research, dynamic digital assistance, and software engineering. \ourmethod{} improves task success by an average of 9.1\% relative to the strongest baseline; combined with OPOD, it improves small-orchestrator success by an average of 10.0\% relative to existing training baselines. \ourmethod{} also achieves a more favorable cost--performance trade-off. Our code is publicly available at \url{https://github.com/XueyangFeng/POrchestra}.
\end{abstract}
% Abstract averages use reported table values and equal weight per comparison.
% Execution: mean of 100*(P/best_baseline-1) across 3 benchmarks x 3 backbones = 9.0543%.
% P: 71.5,47.7,71.0,78.2,53.9,77.0,77.6,55.5,79.0.
% Best baseline: 63.6,39.8,70.0,76.4,46.9,76.0,75.2,44.5,78.0.
% Optimization: mean of 100*(OPOD/SFT-1) on GAIA and GAIA2 = 9.9627%.
% OPOD: 77.9,49.2; SFT (POrchestra): 72.4,43.8.

\section{Introduction}

To tackle complex, long-horizon tasks, humans form organizations to orchestrate individual efforts and achieve what no one could accomplish alone~\citep{march1993organizations,galbraith1974organization}. Inspired by this, agentic orchestration~\citep{ruan2026aorchestra,zhang2026orchestrao1omnimodalagentorchestration,wang2026mas} has emerged as a key means of enabling large language models (LLMs)~\citep{qwen3.5,grattafiori2024llama3herdmodels} to tackle long-horizon tasks across deep research, digital assistance, and software engineering~\citep{mialon2024gaia,froger2026gaia,jimenez2024swebench}. In these settings, the orchestrator delegates subtasks to sub-agents with execution harnesses~\citep{wang2024survey} and independent contexts, thereby mitigating context rot~\citep{du-etal-2025-context} and enabling longer-horizon execution.

For real-world long-horizon tasks, agentic orchestration has progressed from static design toward runtime adaptation. \textbf{(i) Manual configuration.} Early frameworks, such as MetaGPT and AgentVerse~\citep{hong2024metagpt,chen2024agentverse}, rely on humans to predefine agent roles and collaboration structures. \textbf{(ii) Offline orchestration.} Subsequent approaches use an orchestrator to automatically generate agent harnesses and collaboration workflows for each task~\citep{zhang2025gdesigner,wang2025scoreflow}. However, interactions between agents and real-world environments are dynamic and error-prone (e.g., tool failures and shifts in task goals as the environment changes), and such unexpected disruptions can cause predefined harnesses to break down (\emph{harness mismatch})~\citep{cemri2025multi}. \textbf{(iii) Re-orchestration.} To enable runtime adaptation, recent work integrates harness orchestration into task execution. For example, \aorchestra{}~\citep{ruan2026aorchestra} re-creates subsequent sub-agents after preceding sub-agents finish execution, whereas MAS$^2$~\citep{wang2026mas} triggers re-orchestration when specific conditions are met, such as code execution errors.

Despite this progress, current re-orchestration methods face two limitations: \ding{182}\ \textit{Delayed Adaptation in Execution:} harness mismatches often emerge naturally during sub-agent execution, yet the orchestrator cannot readily detect or address them until execution ends or predefined conditions are triggered (Figure~\ref{fig:intro1}(b)). Consequently, sub-agents may continue operating under mismatched harnesses or pursuing outdated subgoals, wasting execution steps, incurring additional costs, and propagating local errors that disrupt the task. \ding{183}\ \textit{Sparse Supervision in Optimization:} the reward used to optimize the orchestrator is typically available only after multiple rounds of orchestration and the sub-agent executions end. Even then, this reward is difficult to attribute to individual orchestration decisions and offers little guidance for improving them.
In light of these limitations, a central question arises: \uline{Can we \textbf{seamlessly integrate dynamic adjustment} of sub-agent harnesses into execution while providing \textbf{denser, richer supervision} for orchestrator policy optimization?}

\begin{figure}[t]
    \centering
            \vspace{-2em}
    \includegraphics[width=\textwidth]{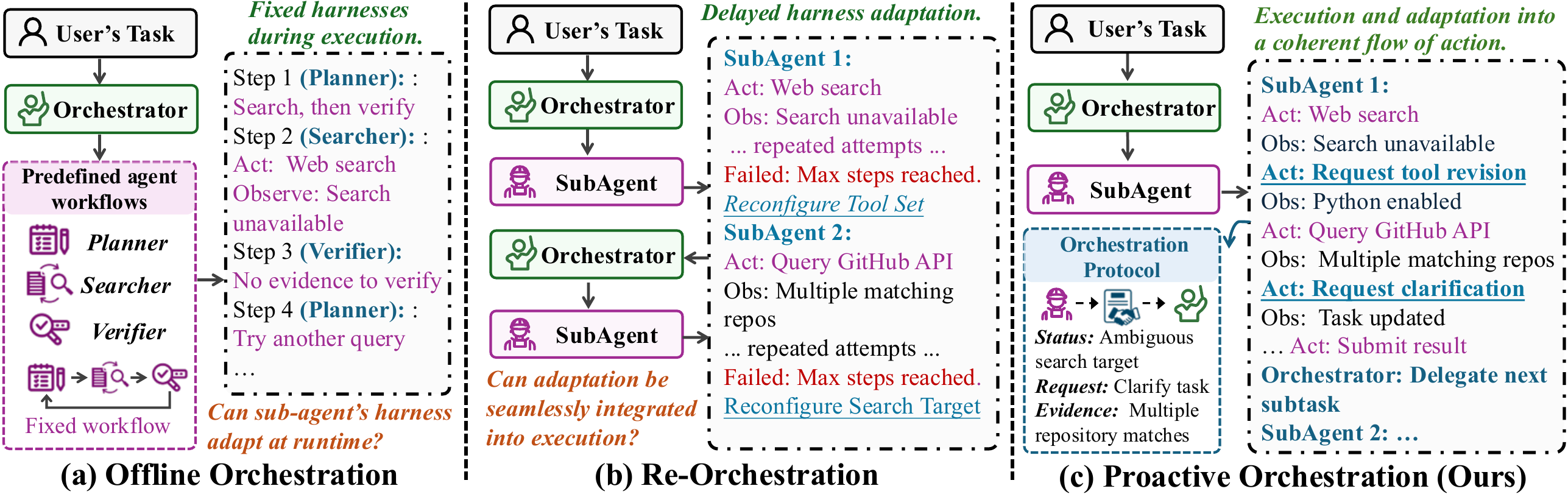}
        \vspace{-1.7em}
    \caption{Comparison of orchestration paradigms. \textbf{(a) Offline orchestration} fixes harnesses and workflows. \textbf{(b) Re-orchestration} adapts harnesses upon completion or predefined triggers. \textbf{(c) \ourmethod{}} integrates execution and harness adaptation via proactive sub-agent communication.}
    \vspace{-1.5em}
    \label{fig:intro1}
\end{figure}

To answer this question, we propose \emph{\ourmethod{}}, a proactive agentic orchestration paradigm. Our motivation is that, as task executors, sub-agents can recognize harness mismatches from local execution difficulties and request revisions. Accordingly, \ourmethod{} makes harness adjustment a collaborative process between sub-agents and the orchestrator. A sub-agent can interact with the environment or proactively pause execution at any logical point to send structured feedback containing revision suggestions and supporting evidence from its local execution perspective. The orchestrator updates the harness from a global task perspective. The update is recorded as an observation in the sub-agent's local trajectory, forming a continuous action stream interleaving harness adaptation with task execution. In this way, the orchestrator is no longer limited to creating sub-agents with fixed harnesses; it can revise their harnesses throughout their lifecycles based on feedback.

%Furthermore, proactive sub-agent feedback reveals how a harness performs in practice and where it falls short. Such feedback not only supports on-the-fly adjustment during execution but also indicates how preceding orchestration decisions should be revised. To exploit this process feedback, we introduce \emph{On-Policy Orchestration Distillation} (OPOD). OPOD extracts proactive sub-agent feedback from trajectories generated by the current orchestration policy and summarizes the exposed configuration mismatches and suggested corrections as natural-language gradients. It then appends these gradients to the original decision context to construct a feedback-augmented self-teacher using the same model, and distills the self-teacher's improved decision distribution into the original orchestration policy. Meanwhile, OPOD retains task-level outcome supervision through reward-weighted regression, combining sparse terminal rewards with targeted process feedback to optimize the orchestration policy more effectively. To summarize, our contributions are as follows:

Furthermore, proactive sub-agent feedback reveals obstacles encountered during execution and provides corresponding suggestions for improvement. This feedback mechanism not only enables sub-agents to adapt seamlessly to dynamic environments during execution but also naturally motivates \emph{On-Policy Orchestration Distillation} (OPOD), which optimizes the orchestrator using hindsight feedback. OPOD first attributes execution feedback to prior decisions and distills it into natural-language improvement gradients. It then appends these gradients to the original decision context and uses the same model to construct a feedback-augmented self-teacher, providing process-level supervision for orchestration decisions through self-distillation. Meanwhile, OPOD retains task-level outcome supervision via reward-weighted regression, combining sparse final rewards with targeted process feedback to optimize the orchestration policy. Our contributions are as follows:

%We evaluate \ourmethod{} on three representative long-horizon agent benchmarks covering deep research (GAIA~\citep{mialon2024gaia}), digital assistance in dynamic and asynchronous environments (GAIA2~\citep{froger2026gaia}), and software engineering (SWE-Bench~\citep{jimenez2024swebench}). Experiments show that \ourmethod{} not only improves runtime task performance, but also effectively supports the training of smaller orchestrators.

\begin{itemize}[leftmargin=1.5em, itemsep=0pt, topsep=0pt]
    \item[\ding{182}] \textbf{Execution Perspective:} We introduce \emph{\ourmethod{}}, an agentic orchestration paradigm that enables sub-agents to proactively participate in their own orchestration, allowing their harnesses to adapt seamlessly as dynamic, long-horizon tasks unfold in real-world environments.
    \item[\ding{183}] \textbf{Optimization Perspective:} We introduce OPOD, an on-policy orchestration self-distillation method that converts proactive feedback from sub-agents in \ourmethod{} into process-level supervision, enabling more targeted optimization of the orchestrator policy.
    \item[\ding{184}] \textbf{Empirical Evaluation:} We evaluate \ourmethod{} on three long-horizon benchmarks spanning deep research, dynamic digital assistance, and software engineering. Results demonstrate \textbf{(i) superior orchestration}, improving task success by an average of 9.1\% relative to the strongest baseline; \textbf{(ii) enhanced learnability}, with \ourmethod{} and OPOD improving small-orchestrator success by an average of 10.0\% relative to training baselines; and \textbf{(iii) Pareto-optimal efficiency}, achieving the highest performance along the cost--performance frontier.
\end{itemize}

\section{Related Work}
\label{sec:related-work}

\paragraph{Agentic Orchestration Systems.}
Agentic orchestration decomposes complex tasks by creating sub-agents and coordinating their work. In early \textbf{manually configured agent systems}, such as AutoGen, MetaGPT, and AgentVerse, developers predefine sub-agent composition and collaboration~\citep{wu2024autogen,hong2024metagpt,chen2024agentverse}; \textbf{offline orchestration} automatically generates agent systems before execution: G-Designer learns communication topologies, while ScoreFlow searches workflows using task-level feedback~\citep{zhang2025gdesigner,wang2025scoreflow}. Both paradigms fix agent configurations before task execution. \textbf{Re-orchestration} further enables runtime adaptation: MAS$^2$ revises generated workflows when specific errors occur~\citep{wang2026mas}; methods such as Orchestra-o1 and \aorchestra{} use sub-agents as the unit of orchestration, re-creating subsequent sub-agents after preceding sub-agents finish execution~\citep{zhang2026orchestrao1omnimodalagentorchestration,ruan2026aorchestra}. This sub-agent-based paradigm has demonstrated practical value in real-world applications~\citep{openai2026codexsubagents,anthropic2026claudecodesubagents}. In \ourmethod{}, we follow the same paradigm but introduce proactive sub-agents to seamlessly integrate orchestration into their execution.

\paragraph{Orchestration Policy Optimization.}
%Orchestration policies learn from demonstrations or task outcomes: \aorchestra{} uses supervised fine-tuning~\citep{ruan2026aorchestra}, while MAS-Orchestra uses sparse, delayed task rewards for RL~\citep{ke2026masorchestra}. For local supervision, Uno-Orchestra augments terminal rewards with format shaping and turn-level credit assignment~\citep{cui2026unoorchestra}; DA-GRPO scores candidate decisions with an LLM rubric~\citep{zhang2026orchestrao1omnimodalagentorchestration}. These scores offer limited attribution and revision guidance for execution obstacles. OPOD attributes proactive sub-agent feedback to prior decisions and converts natural-language improvement signals into process supervision through self-distillation.

Learning from execution experience can improve the orchestrator's delegation decisions. Existing approaches fall into two categories: \textbf{Learning from sparse supervision.} \aorchestra{}~\citep{ruan2026aorchestra} adopts supervised fine-tuning (SFT), while MAS-Orchestra~\citep{ke2026masorchestra} applies reinforcement learning with sparse and delayed task-level rewards. \textbf{Process-level reward shaping.} Uno-Orchestra~\citep{cui2026unoorchestra} introduces format-based rewards at each step, whereas DA-GRPO~\citep{zhang2026orchestrao1omnimodalagentorchestration} uses LLM-based rubrics to score candidate decisions. However, local rewards primarily assess output compliance or decision quality, and therefore provide limited decision attribution or directional revision signals that are grounded in the actual obstacles encountered during execution. By contrast, OPOD turns proactive sub-agent feedback into execution-grounded attribution and revision guidance for prior orchestration decisions.

\section{Methodology}

We first formulate the existing re-orchestration process (Section~\ref{sec:problem-formulation}), then introduce \ourmethod{} (Figure~\ref{fig:method1}), which interleaves sub-agent execution and harness adaptation via proactive communication (Section~\ref{sec:execution-coordination}). Finally, we use this sub-agent feedback as process supervision to optimize the orchestrator with OPOD via on-policy self-distillation (Section~\ref{sec:orchestration-optimization}).

\subsection{Problem Formulation}
\label{sec:problem-formulation}
In sub-agent-based re-orchestration, task execution alternates between global control by the orchestrator and local execution by sub-agents. Specifically, given a user goal $G$ and an orchestration horizon $T_{\mathrm{orch}}$, at round $0\le t<T_{\mathrm{orch}}$, the orchestrator either creates a sub-agent to execute a delegated task or submits the final answer:
\begin{equation}
a_t^{\mathrm{orch}}\sim\pi_\theta(\cdot\mid G,s_t^{\mathrm{orch}}),
\qquad
a_t^{\mathrm{orch}}\in\{\texttt{Delegate}(\Phi_t),\texttt{Finish}(y)\}.
\end{equation}
$\pi_\theta$ denotes the orchestration policy parameterized by $\theta$. For delegation, the orchestrator generates a harness $\Phi_t=(g_t,C_t,\Omega_t,B_t)$ for a new sub-agent, specifying the delegated subgoal, curated working context, assigned tool set, and execution budget. $y$ denotes the final answer, and the orchestration state $s_t^{\mathrm{orch}}$ summarizes the orchestrator's decision history and the local results returned by completed sub-agents.
Each $\Phi_t$ instantiates a sub-agent with action space $\mathcal A_t^{\mathrm{exe}}=\Omega_t\cup\{\texttt{Submit}\}$. Tools in $\Omega_t$ support environment interaction, whereas \texttt{Submit} ends the sub-agent's lifecycle and returns a result $o_t^{\mathrm{orch}}$ to the orchestrator. This repeats until the orchestrator terminates at round $t^*$:
\begin{equation}
\tau^{\mathrm{orch}}
=
(
s_0^{\mathrm{orch}},\texttt{Delegate}(\Phi_0),o_0^{\mathrm{orch}},
\ldots,s_{t^*}^{\mathrm{orch}},\texttt{Finish}(y),R_\tau
).
\end{equation}
Here, $R_\tau$ is the terminal reward assigned by the task evaluator.
This formulation reveals two limitations: \textbf{(1) During execution,} existing re-orchestration methods typically keep each sub-agent's harness fixed throughout its lifetime. Adapting the harness requires waiting for the current sub-agent to invoke \texttt{Submit} or time out before creating a new one. This not only delays adaptation but also causes the loss of the sub-agent's ongoing execution state. \textbf{(2) During optimization,} sub-agent feedback $o_t^{\mathrm{orch}}$ becomes available only at the end of each sub-agent lifecycle, often after dozens of execution steps, whereas the terminal reward $R_\tau$ is delayed until the entire task concludes after multiple such lifecycles. Consequently, supervision for orchestration decisions is sparse and delayed.

\begin{figure}[h]
    \centering
            \vspace{-0.7em}
    \includegraphics[width=0.985\textwidth]{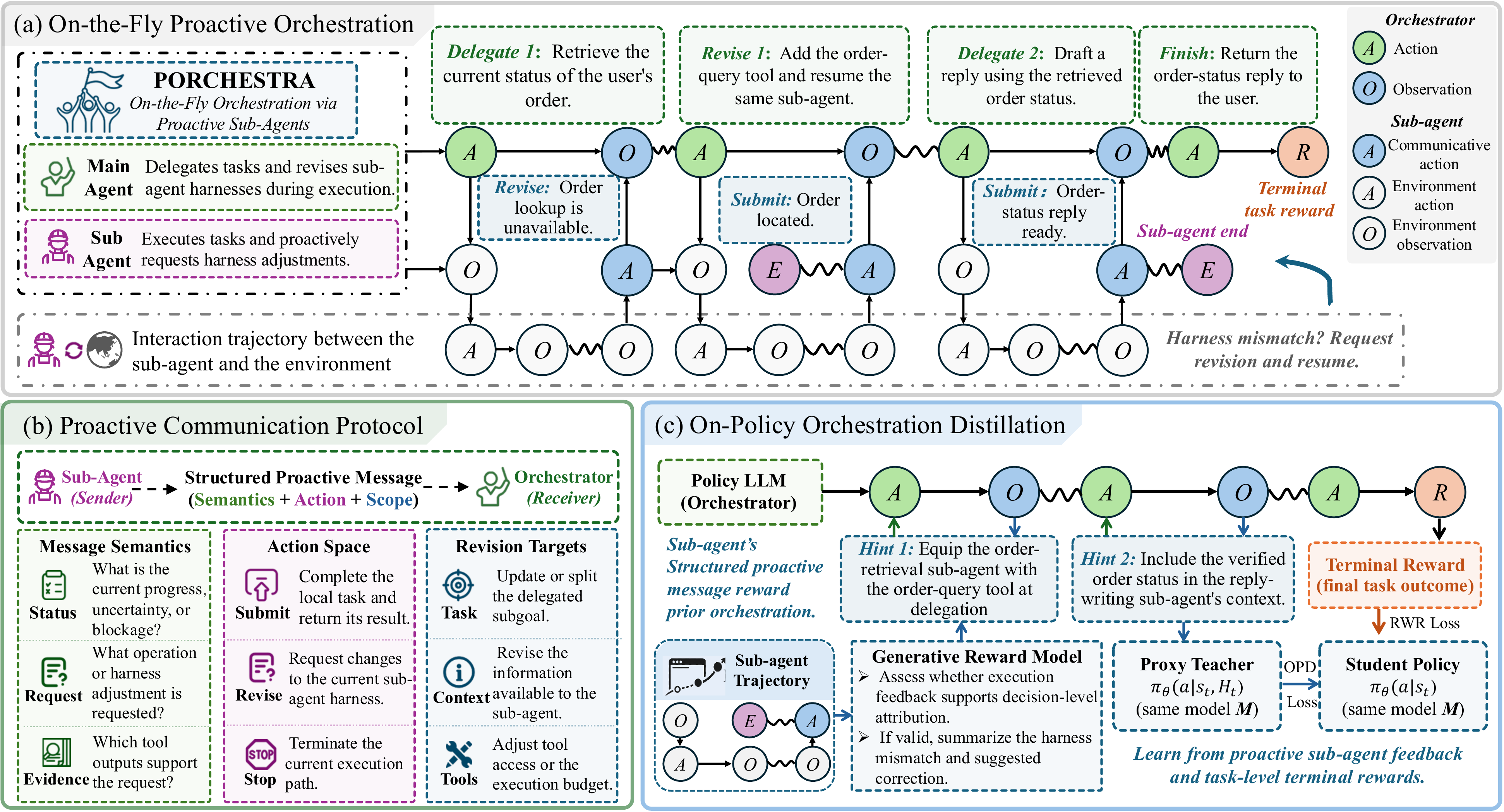}
       \vspace{-0.6em}
    \caption{Overview of \ourmethod{} and OPOD. \textbf{(a)} Sub-agent execution and harness adaptation form a coherent workflow. \textbf{(b)} Structured proactive communication guides harness revisions. \textbf{(c)} OPOD combines feedback-based self-distillation with terminal rewards to optimize the orchestrator.}
    \vspace{-0.8em}
    \label{fig:method1}
\end{figure}

\subsection{On-the-Fly Agentic Orchestration via Proactive Sub-Agents}
\label{sec:execution-coordination}

To address delayed adaptation in agentic orchestration, \ourmethod{} gives sub-agents control over initiating orchestration adjustments during execution. A sub-agent can pause at any point and request harness revisions from the orchestrator. We formalize this process through three components: \textit{(i) Sub-Agent Execution}, \textit{(ii) Proactive Communication Protocol}, and \textit{(iii) Orchestrator Coordination}.

\paragraph{Sub-Agent Execution.}
Let $M_{\mathrm{sub}}$ denote the sub-agent model. For a sub-agent instantiated by $\Phi_i$, at local step $\ell\in\{0,\ldots,B_i-1\}$, the prompt template $I_{\mathrm{sub}}$ combines the subgoal $g_i$, orchestrator-assigned context $C_i$, tool set $\Omega_i$, internal memory $h_{i,\ell}$ consisting of the accumulated local trajectory, and communication action set $\mathcal A_i^{\mathrm{comm}}$ into a sub-agent prompt. The prompt guides $M_{\mathrm{sub}}$ to consider proactive communication as a planning option when it encounters obstacles that cannot be resolved through local actions, such as task ambiguity or unavailable essential tools:
\begin{equation}
a_{i,\ell}^{\mathrm{sub}}
= M_{\mathrm{sub}}(I_{\mathrm{sub}}(g_i,C_i,\Omega_i,h_{i,\ell},\mathcal A_i^{\mathrm{comm}})),
\qquad
a_{i,\ell}^{\mathrm{sub}}
\in
\mathcal A_i^{\mathrm{sub}}
=
\Omega_i\cup\mathcal A_i^{\mathrm{comm}}.
\end{equation}

When $M_{\mathrm{sub}}$ invokes a tool from $\Omega_i$, the environment returns an observation, which is appended to the sub-agent's local trajectory. When $M_{\mathrm{sub}}$ selects a communication action, execution pauses and control passes to the orchestrator. The orchestrator can either revise the current sub-agent's harness or terminate it. In the former case, the sub-agent's harness is updated to $\Phi_i^{\ell+1}$. The update record is added to the sub-agent's internal memory as a communication observation. This mechanism allows the sub-agent to recognize harness updates alongside its prior execution history and continue task execution coherently. In the latter case, the sub-agent's lifecycle ends immediately.

\paragraph{Proactive Communication Protocol.}
Proactive sub-agent communication provides the orchestrator with an intervention point, but runtime adaptation also requires sufficient execution information. Inspired by organizational information-processing theory~\citep{galbraith1974organization}, we instantiate $\mathcal A_i^{\mathrm{comm}}$ as a standardized proactive orchestration communication protocol. Each communication action carries a structured message whose semantics are organized into three complementary signals:
\begin{itemize}[leftmargin=1.5em, itemsep=0pt, topsep=0pt]
    \item \textit{Status signal} summarizes current progress, uncertainty, and blockage from the sub-agent's perspective, helping the orchestrator understand why communication was initiated.
    \item \textit{Request signal} specifies the adjustment target and intended operation.
    \item \textit{Evidence signal} preserves key tool outputs that substantiate the request.
\end{itemize}

$\mathcal A_i^{\mathrm{comm}}$ supports three control actions: \texttt{submit} completes the local task and returns its result; \texttt{revise} requests a modification to the sub-agent configuration; and \texttt{stop} terminates the execution path. When the selected action is \texttt{revise}, the request may specify changes along two complementary dimensions. \textit{Task Scope} adjusts the subgoal $g_i$, for example by splitting or updating the task. \textit{Resource} adjusts the working context $C_i$, tool set $\Omega_i$, or execution budget $B_i$.

% We instantiate each message as schema-constrained JSON. The schema makes the requested operation, target configuration, and supporting evidence explicit and machine-checkable, separating executable updates from free-form rationales.

\paragraph{Orchestrator Coordination.}
From the orchestrator's perspective, \ourmethod{} extends re-orchestration from decisions made only after a sub-agent returns to continuous coordination throughout its lifecycle. For task $G$, the orchestrator begins with \texttt{Delegate}($\Phi_i^0$). When structured feedback arrives, the orchestrator updates the global orchestration state $s_t^{\mathrm{orch}}$ and selects $a_t^{\mathrm{orch}}\in\{\texttt{Delegate}(\Phi_j^0),\texttt{Revise}(\Phi_i^{\ell+1}),\texttt{Stop}(i),\texttt{Finish}(y)\}$. Here, superscripts distinguish configuration versions within one delegation. \ourmethod{} extends the orchestration trajectory into an iterative process in which proactive sub-agent feedback and orchestration decisions interleave:
\begin{equation}
\tau_{\mathrm{PORCH}}^{\mathrm{orch}}
=
\langle
s_0^{\mathrm{orch}},
\texttt{Delegate}(\Phi_0^0),
o_0^{\mathrm{orch}},
s_1^{\mathrm{orch}},
\texttt{Revise}(\Phi_0^1),
o_1^{\mathrm{orch}},
\ldots,
s_{t^*}^{\mathrm{orch}},
\texttt{Finish}(y),
R_\tau
\rangle.
\end{equation}

Consequently, between \texttt{Delegate}($\Phi_i$) and the sub-agent's return, \ourmethod{} uses more timely and informative proactive feedback to adjust the subgoal and harness in $\Phi_i$ as tasks progress and environmental needs change. In contrast, existing sub-agent-based re-orchestration methods adjust only after sub-agent completion, creating tension between coherent execution and timely adaptation:

%This asymmetric allocation also makes \ourmethod{} model-agnostic and plug-and-play. Because it only augments the frozen sub-agent interface with \texttt{Communicate}, it leaves sub-agent parameters and existing tool-use policies unchanged; deployment with heterogeneous sub-agents requires only the communication tool together with its protocol schema and usage instructions.

%\paragraph{Relation to Existing Re-Orchestration.}
%Existing re-orchestration methods typically adjust the organization only after a sub-agent lifecycle ends or rely on predefined conditions to determine when intervention should occur. \ourmethod{} instead embeds orchestration adjustments into sub-agent execution, allowing configuration updates and local task progress to interleave along the same trajectory. The following proposition formalizes the intervention delay induced by restricting re-orchestration to delegation boundaries.

\begin{proposition}[Re-Orchestration After Sub-Agent Completion Incurs Adaptation Delay]
\label{prop:boundary-delay}
Suppose a sub-agent requires at least $H_{\min}$ steps to maintain coherent execution, while re-orchestration is permitted only after its lifecycle ends. If a mismatch occurs uniformly over horizon $H$, then the probability of intervention within a recovery window $\delta$ satisfies $\Pr(D_H\leq\delta)\leq\min\left\{1,\frac{\delta}{H_{\min}}\right\}$.
\end{proposition}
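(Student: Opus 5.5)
We formalize the setting and then bound the probability by the fraction of the horizon that lies close enough to a boundary. Model the sub-agent lifecycles as a partition of the execution horizon $[0,H)$ into consecutive intervals $[b_{k-1},b_k)$, $k=1,\dots,K$, with $b_0=0$ and $b_K=H$. Because each lifecycle must last at least $H_{\min}$ steps to remain coherent, every interval satisfies $b_k-b_{k-1}\ge H_{\min}$, so $K\le H/H_{\min}$. A mismatch occurs at a time $U$ drawn uniformly on $[0,H)$. Re-orchestration is allowed only at lifecycle ends, so the earliest possible intervention is the next boundary $b_{k(U)}$, where $k(U)$ indexes the interval containing $U$. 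The delay is therefore $D_H=b_{k(U)}-U$. This definition is the one point that should be spelled out explicitly in the writeup.

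The event $\{D_H\le\delta\}$ holds exactly when $U\in[b_k-\delta,b_k)\cap[b_{k-1},b_k)$ for some $k$. Its Lebesgue measure is at most $\sum_k\min\{\delta,b_k-b_{k-1}\}\le K\delta$. Dividing by $H$ gives
\[
\Pr(D_H\le\delta)\le \frac{K\delta}{H}\le \frac{\delta}{H_{\min}}.
\]
Combined with the trivial bound $\Pr\le 1$, this yields the claimed $\min\{1,\delta/H_{\min}\}$. The argument is the same in the discrete-step setting, with $U$ uniform on $\{0,\dots,H-1\}$: each interval then contributes at most $\delta$ bad steps (or $\delta+1$ if $D_H=0$ is allowed, which we avoid by defining the boundary as the step after $\texttt{Submit}$). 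We will state which convention is used.

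The main obstacle is modelling rather than computation. First, the lifecycle schedule could be random or could depend on the mismatch. We handle this by conditioning on the partition: we assume it is independent of $U$, which is what "occurs uniformly" should mean. The bound then holds for every fixed partition and hence also after averaging. Second, the final interval may be truncated at $H$. Since the bound only uses $K\le H/H_{\min}$, a shorter last segment can only reduce $K$, so we just need to check that the count remains valid when the last lifecycle is cut off; if necessary we assume $H_{\min}$ divides the relevant lengths, or use $K\le\lfloor H/H_{\min}\rfloor$.
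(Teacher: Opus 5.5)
Your argument is correct, but it takes a different and more general route than the paper.

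\textbf{The paper's route.} The paper reads $H$ as the length of a \emph{single} sub-agent lifecycle. With $\tau\sim\operatorname{Unif}(0,H)$ and the only admissible intervention at $H$, it gets $D_H=H-\tau\sim\operatorname{Unif}(0,H)$. This gives the exact value $\Pr(D_H\le\delta)=\min\{1,\delta/H\}$, along with $\mathbb E[D_H]=H/2$ and $\mathrm{Var}(D_H)=H^2/12$. The paper then notes that this probability is non-increasing in $H$ and takes the supremum over admissible horizons $H\ge H_{\min}$.

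\textbf{Your route.} You tile a global horizon by $K$ consecutive lifecycles, each of length at least $H_{\min}$, and bound the measure of the set of mismatch times close to a boundary by $K\delta$, with $K\le H/H_{\min}$. The paper's setting is your case $K=1$.

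\textbf{What each buys.} Your version shows the bound holds for any schedule of consecutive delegations with heterogeneous lengths. By conditioning on the partition, it also covers random schedules independent of the mismatch time. This is arguably the more natural reading of ``uniformly over horizon $H$.'' The paper's version gives the exact delay distribution and its explicit monotone dependence on $H$, at the cost of covering only one lifecycle. Both bounds are tight: the paper's at $H=H_{\min}$, and yours when every segment has length exactly $H_{\min}\ge\delta$.

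\textbf{A flawed aside on truncation.} In your last paragraph, a truncated final segment does not ``only reduce $K$.'' Suppose $b_K=H$ remains an admissible intervention point while the last segment has length $L<H_{\min}$. Then the bound can fail. For example, take $H_{\min}=10$, segments $[0,10)$ and $[10,11)$, and $\delta=1$. Then $\Pr(D_H\le\delta)=2/11>1/10$, and $K=2>\lfloor H/H_{\min}\rfloor$.

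Since the hypothesis requires every lifecycle to last at least $H_{\min}$, you can simply drop this aside. If you do want to handle horizon truncation, model the cut-off lifecycle as ending at some $b_K>H$ with $b_K-b_{K-1}\ge H_{\min}$. For $\delta\le H_{\min}$, its bad set then has measure at most $\max\{0,L-(H_{\min}-\delta)\}\le L\delta/H_{\min}$. Summing with $(K-1)\delta\le (H-L)\delta/H_{\min}$ still yields the bound $\delta/H_{\min}$.
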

When $H_{\min}\gg\delta$, this bound scales inversely with $H_{\min}$: the longer the horizon required for coherence, the lower the upper bound on timely intervention. \ourmethod{} allows sub-agents to trigger adjustments during execution, so intervention need not wait until the $H$-step execution ends. We summarize this interleaved inference process in Appendix~\ref{app:algorithms} (Algorithm~\ref{alg:porchestra-inference}).
% Proactive feedback also provides intermediate supervision for orchestrator optimization, as discussed next.

\subsection{On-Policy Orchestration Distillation}
\label{sec:orchestration-optimization}

% Compared with eliciting orchestration solely through prompting, optimizing the orchestrator to improve its decisions from past execution experience offers an attractive direction. However, learning an orchestration policy faces a fundamental difficulty: each orchestration decision typically receives local feedback only after the corresponding sub-agent completes its execution, while its task-level effect often cannot be evaluated until the overall task terminates, making supervision extremely sparse and expensive to obtain. The key challenge is therefore to provide more timely and informative feedback for orchestration decisions and convert it into effective supervision.
Compared with eliciting orchestration through prompting, optimizing the orchestrator from execution experience offers an attractive direction. However, supervision for orchestration decisions is sparse and expensive to obtain, making it essential to provide more timely and informative feedback and convert it into effective supervision. \ourmethod{}'s interleaved process provides a natural basis for this objective: proactive sub-agent feedback not only guides runtime adaptation but can also reveal how preceding orchestration decisions could be improved. Motivated by this, we introduce On-Policy Orchestration Distillation (OPOD), which converts sub-agent feedback into natural-language gradients, constructs a feedback-augmented self-teacher using the same model, and distills its decision distribution into the orchestration policy on trajectories generated by the current policy.

\paragraph{Natural-Language Gradient Generation.}
To turn structured feedback $o_t^{\mathrm{orch}}$ into supervision for improving prior orchestration decisions, we first assess whether the issues it reveals can be attributed to the original orchestration choices. For example, subgoal ambiguity exposed during execution may reflect unclear task delegation and guide its improvement; by contrast, if the user changes task requirements during sub-agent execution, the resulting need to revise the subgoal should not be attributed to that decision. When sufficient evidence supports attribution, we extract cues for harness improvement from proactive sub-agent feedback and distill them into a natural-language gradient, providing concrete revision guidance for prior orchestration decisions. Formally, whenever $o_t^{\mathrm{orch}}$ is returned, we use the same backbone model $M$ as the trainable orchestrator to perform attribution under a fixed generative reward-modeling instruction $I_{\mathrm{GRM}}$. Its inputs are the corresponding orchestration state and decision, the intervening sub-agent execution segment, and the returned feedback:
\begin{equation}
\{\texttt{FLAG}_t,\mathcal H_t\}
=M(I_{\mathrm{GRM}}(s_t^{\mathrm{orch}},a_t^{\mathrm{orch}},\langle h_k^{\mathrm{sub}},a_k^{\mathrm{sub}},o_k^{\mathrm{env}}\rangle_{k=1}^{K_t},o_t^{\mathrm{orch}})).
\end{equation}
Here, $\langle h_k^{\mathrm{sub}},a_k^{\mathrm{sub}},o_k^{\mathrm{env}}\rangle_{k=1}^{K_t}$ denotes the sub-agent execution segment after $a_t^{\mathrm{orch}}$ takes effect and before $o_t^{\mathrm{orch}}$ is returned, where $k$ indexes execution steps and $K_t$ is the segment length.
The flag $\texttt{FLAG}_t\in\{\texttt{true},\texttt{false}\}$ indicates whether sufficient evidence supports attribution to the orchestration decision. When $\texttt{FLAG}_t=\texttt{true}$, $\mathcal H_t$ summarizes the harness mismatch and suggested correction; otherwise, $\mathcal H_t$ is empty and the attribution unit is excluded from distillation.

\paragraph{Self-Distillation Policy Optimization.}
To use $\mathcal H_t$ as policy-level supervision, we append it to the original orchestration context and use the same model to construct a feedback-augmented teacher policy. The original orchestration policy and teacher policy are defined as
\begin{equation}
\pi_\theta(a\mid s_t^{\mathrm{orch}})
=
M(a\mid I_{\mathrm{orch}}(s_t^{\mathrm{orch}})),
\qquad
\pi_\theta^{+}(a\mid s_t^{\mathrm{orch}},\mathcal H_t)
=
M(a\mid I_{\mathrm{orch}}(s_t^{\mathrm{orch}})\oplus \mathcal H_t),
\end{equation}
where $\oplus$ denotes context concatenation. The resulting feedback-augmented policy can be viewed as an in-context policy-improvement step: $\mathcal H_t$ changes the orchestration decision distribution without updating model parameters. Because $\pi_\theta^{+}$ uses the same backbone model and differs only in its augmented context, it serves as a self-teacher without requiring an external model. For a complete trajectory $\tau$, let $\mathcal T(\tau)=\{t\mid(s_t^{\mathrm{orch}},a_t^{\mathrm{orch}})\in\tau\}$ denote its orchestration decision indices. We define the on-policy distillation (OPD) loss~\citep{agarwal2024onpolicydistillation} using full-vocabulary reverse KL:
\begin{equation}
\ell_{\mathrm{OPD}}(\tau)
=
{\textstyle\sum_{t\in\mathcal T(\tau)}}
\mathbbm{1}[\texttt{FLAG}_t=\texttt{true}]
D_{\mathrm{KL}}
(
\pi_\theta(\cdot\mid s_t^{\mathrm{orch}})
\,\|\,
\operatorname{sg}[
\pi_\theta^{+}(\cdot\mid s_t^{\mathrm{orch}},\mathcal H_t)
]
)
.
\end{equation}
where $t$ indexes orchestration decisions and $\operatorname{sg}[\cdot]$ stops gradients through the self-teacher. This loss aggregates guidance from all attribution units with valid gradients.
We retain outcome supervision via reward-weighted regression (RWR)~\citep{10.1145/1273496.1273590}, with trajectory loss:
\begin{equation}
\ell_{\mathrm{RWR}}(\tau)
=
-R_\tau
{\textstyle\sum_{t\in\mathcal T(\tau)}}
\log\pi_\theta(a_t^{\mathrm{orch}}\mid s_t^{\mathrm{orch}})
.
\end{equation}
Finally, OPOD combines process-level OPD and trajectory-level RWR supervision: $\mathcal L_{\mathrm{OPOD}}(\tau)=\ell_{\mathrm{RWR}}(\tau)+\lambda\ell_{\mathrm{OPD}}(\tau)$.
In implementation, we initialize the orchestration policy through supervised fine-tuning on expert trajectories, then optimize it with OPOD on on-policy rollouts (Algorithm~\ref{alg:opod-training}).

\begin{proposition}[OPOD Reduces Long-Horizon Attribution Error]
\label{prop:opod-stability}
Under bounded attribution noise, terminal rewards propagate through an attribution horizon $O(H_{\mathrm{orch}})$ times longer than that of OPD feedback, thereby accumulating noise over a longer execution chain. Furthermore, when the bias of OPD feedback is bounded, there exists a weight $\lambda>0$ such that OPOD yields lower expected error relative to ideal process-level supervision than RWR alone (see Appendix~\ref{app:theory} for both proofs).
\end{proposition}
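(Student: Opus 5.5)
We prove the two claims of Proposition~\ref{prop:opod-stability} separately, in a stylized model where each orchestration decision $t$ has an ideal process-level target $g_t^\star$. This can be taken to be the ideal improvement direction, e.g.\ the gradient of the KL to an oracle teacher. Both the RWR and OPD signals are treated as noisy estimates of $g_t^\star$.

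\textbf{Noise accumulation.} We model attribution as a chain. Between decision $t$ and the evidence used to credit it, the signal passes through a sequence of execution steps. Each step injects zero-mean noise $\epsilon_k$ that is independent across steps, with $\mathbb E[\epsilon_k^2]\le\sigma^2$; this is the bounded-attribution-noise assumption. For OPD, the feedback $o_t^{\mathrm{orch}}$ is attributed to the immediately preceding decision through one sub-agent segment, of length at most $K_t\le B$. For RWR, the terminal reward $R_\tau$ must be credited back through all subsequent orchestration rounds up to $t^*$, and each round contains such a segment. The attribution horizon is therefore of order $H_{\mathrm{orch}}\cdot B$ versus $B$, i.e.\ $O(H_{\mathrm{orch}})$ times longer.

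We sum the independent noises to get the variances:
\begin{equation}
\mathrm{Var}_{\mathrm{RWR}}\le \sigma^2 H_{\mathrm{orch}}B,
\qquad
\mathrm{Var}_{\mathrm{OPD}}\le\sigma^2 B.
\end{equation}
The ratio $O(H_{\mathrm{orch}})$ is the first claim. We then make explicit the assumptions this needs: independence of per-step noise, and a matching lower bound $\Omega(\sigma^2 H_{\mathrm{orch}}B)$ for RWR under a non-degenerate variance condition.

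\textbf{Existence of $\lambda$.} We write the two estimators as $\hat g_R=g^\star+b_R+\xi_R$ and $\hat g_O=g^\star+b_O+\xi_O$, where $\xi_R$ and $\xi_O$ are zero-mean noise terms. The RWR variance is $V_R$ and the OPD variance is $V_O$, with the bounds from the first step. The biases satisfy $\|b_O\|\le\beta$, and $b_R$ may be zero. The OPOD direction, normalized, is the convex combination
\begin{equation}
\hat g_\lambda=\frac{\hat g_R+\lambda\hat g_O}{1+\lambda}.
\end{equation}
Equivalently, we analyze the unnormalized loss and absorb the scale into the step size.

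The error of this combination is
\begin{equation}
E(\lambda)=\mathbb E\|\hat g_\lambda-g^\star\|^2=\frac{\|b_R+\lambda b_O\|^2+V_R+\lambda^2V_O+2\lambda C}{(1+\lambda)^2},
\end{equation}
where $C$ is the noise covariance. At $\lambda=0$ this recovers $E(0)$, the error of RWR alone. Differentiating at $0$ gives
\begin{equation}
E'(0)=2\langle b_R,b_O\rangle+2C-2(\|b_R\|^2+V_R).
\end{equation}
We bound $\langle b_R,b_O\rangle\le\|b_R\|\beta$ and $|C|\le\sqrt{V_RV_O}$. Then $E'(0)<0$ holds whenever
\begin{equation}
\|b_R\|\beta+\sqrt{V_RV_O}<\|b_R\|^2+V_R.
\end{equation}
This is guaranteed, for example, when $\beta$ is small relative to $\sqrt{V_R}$ and $V_O<V_R$. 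By the first step, $V_O<V_R$ holds for $H_{\mathrm{orch}}>1$. Continuity then yields some $\lambda>0$ with $E(\lambda)<E(0)$. If the noises are uncorrelated we can additionally write down the optimal $\lambda$ in closed form.

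\textbf{Main obstacle.} The hardest step is making the modeling precise enough to be honest. We need to define what it means for the RWR loss, a trajectory-level weighted log-likelihood, and the OPD loss, a per-step reverse KL, to be ``estimates'' of a common ideal process-level target. We also need to justify the bounded bias of the LLM-generated $\mathcal H_t$.

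Our first attempt will work at the level of gradient estimators. The per-decision RWR gradient is $R_\tau\nabla\log\pi_\theta(a_t\mid s_t)$, a REINFORCE-style credit signal whose variance grows with the number of subsequent stochastic steps. The OPD gradient is $\nabla D_{\mathrm{KL}}(\pi_\theta\|\pi_\theta^+)$, and we assume $\pi_\theta^+$ is within $\beta$ of the ideal teacher. Both are stated as explicit assumptions. Conditional on them, the two calculations above complete the proof.
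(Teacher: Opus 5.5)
\textbf{Verdict: genuine gap in the existence-of-$\lambda$ step; easily fixed with the paper's two assumptions.}

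Your overall route matches the paper's. You sum per-step zero-mean noise over an attribution chain of length of order $H_{\mathrm{orch}}$ sub-agent segments versus one segment. You then expand bias and variance for the normalized combination $(\hat g_R+\lambda\hat g_O)/(1+\lambda)$ and argue to first order at $\lambda=0$. The first part is fine.

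\textbf{Where it breaks.} You allow a nonzero RWR bias $b_R$ and a noise covariance $C$, so you need $\|b_R\|\beta+\sqrt{V_RV_O}<\|b_R\|^2+V_R$. You discharge this by asserting that ``by the first step, $V_O<V_R$ holds for $H_{\mathrm{orch}}>1$.'' That does not follow, for three reasons.
\begin{itemize}
\item Step one gives only the upper bounds $V_R\le\sigma^2H_{\mathrm{orch}}B$ and $V_O\le\sigma^2B$, and upper bounds do not order the two variances.
\item Even the lower bound $V_R\ge c\,\sigma^2H_{\mathrm{orch}}B$ you propose gives $V_O<V_R$ only once $cH_{\mathrm{orch}}$ exceeds the ratio of the two chains' per-step noise levels. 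It does not give it for every $H_{\mathrm{orch}}>1$.
\item Your sufficient condition also asks for $\beta$ to be small relative to $\sqrt{V_R}$. The proposition assumes only that the OPD bias is bounded.
\end{itemize}
In your general model the conclusion can in fact fail. Take $b_R=0$, $\xi_O=\xi_R$ and $b_O\neq 0$. Then $E(\lambda)=V_R+\lambda^2\|b_O\|^2/(1+\lambda)^2>E(0)$ for every $\lambda>0$. So some structural assumption is genuinely needed, not just sharper bounds.

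\textbf{The missing idea.} The paper imposes exactly such structure: RWR is an unbiased estimate of $g_t^\star$ ($b_R=0$), and the RWR and OPD perturbations are uncorrelated ($C=0$). Your own formula then collapses to $E(\lambda)=\bigl(V_R+\lambda^2(V_O+\|b_O\|^2)\bigr)/(1+\lambda)^2$. Hence $E'(0)=-2V_R<0$ as soon as $V_R>0$.

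The OPD variance and bias enter only at second order in $\lambda$, so they need only be finite. No comparison of $V_O$ with $V_R$ and no smallness of $\beta$ is needed. The horizon comparison from your first part is not used for the existence of $\lambda$ at all; it only motivates why $V_R$ is large.

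State these two assumptions explicitly, together with $V_R>0$. Continuity then gives a small $\lambda>0$ with $E(\lambda)<E(0)$, and your argument coincides with the paper's.
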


\begin{table*}[t]
\centering
\small
\setlength{\tabcolsep}{2.5pt}
\renewcommand{\arraystretch}{1.11}
\setlength{\arrayrulewidth}{0.25pt}
\setlength{\belowcaptionskip}{3pt}
\caption{Performance with three backbones across three dynamic, long-horizon environments: GAIA, GAIA2, and SWE-Bench. Baselines include single-agent systems (ReAct, Mini-SWE), offline orchestration (Skill-MAS, AutoAgents), and re-orchestration (MAS$^2$, Orchestra-o1, \aorchestra{}). Best results are \textbf{bold}, and second-best results are \underline{underlined}.}
\label{tab:deep_search}
\begin{adjustbox}{width=0.99\textwidth}
\begin{tabular}{lcccccccccc}
\toprule
\multirow{2}{*}{\textbf{Method}}
& \multicolumn{4}{c}{\textbf{General AI Assistant}}
& \multicolumn{5}{c}{\textbf{General AI Assistant 2}}
& \multirow{2}{*}{\textbf{SWE-Bench}} \\
\cmidrule(lr){2-5} \cmidrule(lr){6-10}
& Level 1 & Level 2 & Level 3 & Average
& Execution & Search & Adaptability & Time & Average & \\
\midrule

\multicolumn{11}{l}{\textit{\textbf{DeepSeek-V3.2}}} \\
ReAct
& \underline{79.2} & 58.1 & \underline{50.0} & \underline{63.6} & \underline{34.4} & 56.3 & 21.9 & \underline{21.9} & 33.6 & 62.0 \\
Mini-SWE
& 64.2 & \underline{64.0} & 26.9 & 58.2 & 25.0 & 50.0 & 18.8 & 12.5 & 26.6 & \underline{70.0} \\

\hdashline[4pt/2pt]
% SWE-Bench merged final, same 100 tasks: SubAgent DeepSeek-V3.2; original 60/100; rerun 0/2, 1 additional task stopped; merged 60/100.
Skill-MAS
& 67.9 & 58.1 & 30.8 & 57.0 & 21.9 & 43.8 & 0.0 & 0.0 & 16.4 & 60.0 \\
% SWE-Bench final, same 100 tasks: Main Gemini-low / SubAgent DeepSeek-V3.2; 30/100; 43 timeouts, 1 planning error.
AutoAgents
& 64.2 & 40.7 & 34.6 & 47.3 & 15.6 & 59.4 & 0.0 & 6.3 & 20.3 & 30.0 \\

\hdashline[4pt/2pt]
% SWE-Bench: merged API reruns, 62/100 (original run: 39/100).
MAS$^2$
& 47.2 & 39.5 & 19.2 & 38.8 & 25.0 & 56.3 & 12.5 & 3.1 & 24.2 & 62.0 \\
Orchestra-o1
& \textbf{81.1} & 61.6 & 30.8 & 63.0 & \textbf{43.8} & 59.4 & \underline{25.0} & 9.4 & 34.4 & 66.0 \\
\aorchestra{}
& 73.6 & 55.8 & 46.2 & 60.0 & \textbf{43.8} & \underline{75.0} & \underline{25.0} & 15.6 & \underline{39.8} & 65.0 \\

\hdashline[4pt/2pt]
\rowcolor{arporow}
\ourmethod{}
& \underline{79.2} & \textbf{72.1} & \textbf{53.8} & \textbf{71.5} & \textbf{43.8} & \textbf{78.1} & \textbf{43.8} & \textbf{25.0} & \textbf{47.7} & \textbf{71.0} \\

\midrule
\multicolumn{11}{l}{\textit{\textbf{Gemini-3-Flash}}} \\
% SWE-Bench official evaluation: Gemini sub-agent, 53/100 passed.
ReAct
& \underline{84.9} & 66.3 & 46.2 & 69.1 & \underline{50.0} & \underline{84.4} & \underline{28.1} & \underline{25.0} & \underline{46.9} & 53.0 \\
% GAIA final: Gemini Mini-SWE; L1 43/53, L2 66/86, L3 15/26; total 124/165.
Mini-SWE
& 81.1 & \textbf{76.7} & 57.7 & 75.2 & 43.8 & 25.0 & 25.0 & \underline{25.0} & 29.7 & 75.0 \\

\hdashline[4pt/2pt]
% SWE-Bench merged final, same 100 tasks: SubAgent Gemini-low; original 49/100; rerun 19/42, 1 additional task stopped; merged 68/100.
% GAIA final: Gemini-low; L1 43/53, L2 53/86, L3 12/26; total 108/165.
Skill-MAS
& 81.1 & 61.6 & 46.2 & 65.5 & 28.1 & 62.5 & 9.4 & 6.3 & 26.6 & 68.0 \\
% SWE-Bench final, same 100 tasks: Main Gemini-low / SubAgent Gemini-low; 67/100; 1 timeout, 1 planning error.
% GAIA final: Gemini-low; L1 37/53, L2 44/86, L3 14/26; total 95/165; elapsed approximately 2 hours.
AutoAgents
& 69.8 & 51.2 & 53.8 & 57.6 & 25.0 & 46.9 & 0.0 & 3.1 & 18.8 & 67.0 \\

\hdashline[4pt/2pt]
% GAIA final: Gemini Orchestra-o1; L1 46/53 (86.79%), L2 58/86 (67.44%), L3 15/26 (57.69%); total 119/165 (72.12%).
Orchestra-o1
& \textbf{86.8} & 67.4 & 57.7 & 72.1 & \underline{50.0} & \underline{84.4} & 25.0 & 15.6 & 43.8 & 69.0 \\
% SWE-Bench final: Main Gemini-low / SubAgent Gemini-low; 59/100.
% GAIA final: Gemini-low; L1 41/53, L2 49/86, L3 11/26; total 101/165; elapsed approximately 3 hours 26 minutes.
MAS$^2$
& 77.4 & 57.0 & 42.3 & 61.2 & \textbf{56.3} & 75.0 & 21.9 & 0.0 & 38.6 & 59.0 \\
% SWE-Bench final: Gemini \aorchestra{}; 76/100.
% GAIA final: Main Gemini / SubAgent Gemini-low; L1 46/53 (86.79%), L2 64/86 (74.42%), L3 16/26 (61.54%); total 126/165 (76.36%).
\aorchestra{}
& \textbf{86.8} & \underline{74.4} & \underline{61.5} & \underline{76.4} & 46.9 & 75.0 & 18.8 & 18.8 & 39.8 & \underline{76.0} \\

\hdashline[4pt/2pt]
\rowcolor{arporow}
\ourmethod{}
& \textbf{86.8} & \textbf{76.7} & \textbf{65.4} & \textbf{78.2} & \textbf{56.3} & \textbf{90.6} & \textbf{31.3} & \textbf{37.5} & \textbf{53.9} & \textbf{77.0} \\

\midrule
\multicolumn{11}{l}{\textit{\textbf{DeepSeek-V4-Flash}}} \\
ReAct
& 79.2 & \underline{72.1} & \textbf{61.5} & 72.7 & 37.5 & 75.0 & \underline{37.5} & \textbf{28.1} & \underline{44.5} & 71.0 \\
Mini-SWE
& 76.9 & 67.4 & \underline{53.8} & 68.3 & 40.6 & 78.1 & 25.0 & 18.8 & 40.6 & 73.0 \\

\hdashline[4pt/2pt]
% SWE-Bench final: Skill-MAS / DeepSeek-V4-Flash; 76/100.
Skill-MAS
& 73.6 & 59.3 & 46.2 & 61.8 & 28.1 & 65.6 & 0.0 & 3.1 & 24.2 & 76.0 \\
% SWE-Bench final, same 100 tasks: Main Gemini-low / SubAgent DeepSeek-V4; 69/100; 4 connection errors, 4 timeouts, 1 planning error.
AutoAgents
& \underline{86.8} & 62.8 & 46.2 & 67.9 & 21.9 & 53.1 & 0.0 & 0.0 & 18.8 & 69.0 \\

\hdashline[4pt/2pt]
Orchestra-o1
& 83.0 & 57.0 & 38.5 & 62.4 & 50.0 & 77.4 & 25.0 & \underline{25.0} & 43.8 & 75.0 \\
% SWE-Bench: merged API reruns, 78/100 (original run: 56/100).
MAS$^2$
& 69.8 & 61.6 & 46.2 & 61.8 & 31.3 & 75.0 & 3.1 & 0.0 & 26.6 & \underline{78.0} \\
\aorchestra{}
& \underline{86.8} & \textbf{75.6} & 50.0 & \underline{75.2} & \underline{53.1} & \underline{81.3} & 24.1 & 15.6 & 43.0 & 71.0 \\

\hdashline[4pt/2pt]
\rowcolor{arporow}
\ourmethod{}
& \textbf{88.7} & \textbf{75.6} & \textbf{61.5} & \textbf{77.6} & \textbf{59.4} & \textbf{87.5} & \textbf{46.9} & \textbf{28.1} & \textbf{55.5} & \textbf{79.0} \\

\bottomrule
\end{tabular}
\end{adjustbox}
\vspace{-1.3em}
\end{table*}

\section{Experiments}

We evaluate our framework and optimization method through four research questions:
\begin{itemize}[leftmargin=1.5em, itemsep=0pt, topsep=0pt]
    \item \textbf{RQ1:} Does proactive orchestration improve task success across benchmarks and backbone models? We compare \ourmethod{} with single-agent and orchestration baselines (Section~\ref{sec:exp-execution}).
    \item \textbf{RQ2:} Does OPOD improve orchestrator optimization? We compare training methods and examine the contributions of process feedback and outcome supervision (Section~\ref{sec:exp-optimization}).
    \item \textbf{RQ3:} How do \ourmethod{} and OPOD affect cost--performance trade-offs? We analyze inference costs and the efficiency of trained small orchestrators (Section~\ref{sec:exp-cost}).
    \item \textbf{RQ4:} When and why does proactive orchestration help? We examine task length and dynamicity, and compare proactive feedback with predefined triggers and periodic polling (Section~\ref{sec:exp-framework}).
\end{itemize}

\subsection{Experimental Setup}

\paragraph{Benchmarks.}
We evaluate \ourmethod{} on three long-horizon agent benchmarks with distinct interaction settings. \textbf{GAIA}~\citep{mialon2024gaia} evaluates the ability of deep-research assistants to complete tasks by using real tools and accessing real web pages; its interactions are affected by environmental changes and tool stability. \textbf{GAIA2}~\citep{froger2026gaia} is a challenging read--write interaction benchmark that emphasizes agents' adaptability in dynamic and asynchronous environments, such as adjusting task goals and subsequent actions based on replies received after sending an email. \textbf{SWE-Bench Verified}~\citep{jimenez2024swebench} evaluates software-engineering agents by requiring them to resolve real-world GitHub issues. We use task success rate as the evaluation metric.

\paragraph{Model \& Baselines.}
For inference-time evaluation, we compare \ourmethod{} with three categories of baselines: \textbf{(1) single-agent baselines,} including ReAct~\citep{yao2023react} and Mini-SWE~\citep{yang2024sweagent}; \textbf{(2) offline orchestration systems,} including Skill-MAS~\citep{lin2026skill} and AutoAgents~\citep{ijcai2024p3}; and \textbf{(3) re-orchestration systems,} including MAS$^2$~\citep{wang2026mas}, \aorchestra{}~\citep{ruan2026aorchestra}, and Orchestra-o1~\citep{zhang2026orchestrao1omnimodalagentorchestration}. Following the \aorchestra{} setup, orchestrated systems use \modelname{Gemini-3-Flash}~\citep{googledeepmind2025gemini3flash} as the orchestrator. All systems are evaluated with \modelname{Gemini-3-Flash}, \modelname{DeepSeek-V3.2}~\citep{deepseekai2025deepseekv32pushingfrontieropen}, and \modelname{DeepSeek-V4-Flash}~\citep{deepseekai2026deepseekv4highlyefficientmilliontoken} as execution backbones. For training-time evaluation, we train \modelname{Qwen3.5-9B}~\citep{qwen3.5} orchestrators across Orchestra-o1, \aorchestra{}, and \ourmethod{}. Methods include SFT, RWR, DA-GRPO~\citep{zhang2026orchestrao1omnimodalagentorchestration}, and OPOD; each is applied to frameworks compatible with its feedback and execution interface.
See Appendices~\ref{app:implementation} and~\ref{app:prompts-protocol} for implementation details and prompts.

\subsection{Effectiveness of Proactive Orchestration in Execution (RQ1)}
\label{sec:exp-execution}

\noindent\textbf{Obs. \ding{182}: Existing orchestration methods struggle to adapt reliably to dynamic environments.}
Table~\ref{tab:deep_search} shows that offline orchestration (Skill-MAS and AutoAgents) substantially trails single-agent baselines in average GAIA2 success across all three backbones. For Adaptability tasks involving changing task requirements, AutoAgents scores 0\% with all three backbones. Current re-orchestration baselines improve some results but remain mixed against single-agent baselines: with \modelname{Gemini-3-Flash}, \aorchestra{} reaches 76.4\% on GAIA, above ReAct's 69.1\%, but 39.8\% on GAIA2, below ReAct's 46.9\%. These results suggest static harnesses or delayed harness updates represent important bottlenecks to consistent gains from orchestration in dynamic environments.

\begin{table*}[t]
\centering
\small
\setlength{\tabcolsep}{2.5pt}
\renewcommand{\arraystretch}{1.01}
\setlength{\belowcaptionskip}{3pt}
\caption{Results and ablation of OPOD across orchestration harnesses and algorithms.}
\label{tab:optimization}
\begin{adjustbox}{max width=\textwidth}
\begin{tabular}{llccccccccc}
\toprule
\multirow{2}{*}{\textbf{Orchestration}}
& \multirow{2}{*}{\textbf{Optimization}}
& \multicolumn{4}{c}{\textbf{General AI Assistant}}
& \multicolumn{5}{c}{\textbf{General AI Assistant 2}} \\
\cmidrule(lr){3-6} \cmidrule(lr){7-11}
& & Level 1 & Level 2 & Level 3 & Average
& Execution & Search & Adaptability & Time & Average \\
\midrule

\multirow{4}{*}{\textbf{Orchestra-o1}}
% GAIA2 strict: Execution 5/32, Search 12/32, Adaptability 0/32, Time 0/32; total 17/128.
% Missing persisted results count as incorrect: Adaptability 6, Search 1; total 7.
& Qwen3.5-9B & 69.8 & 57.0 & 50.0 & 60.0 & 15.6 & 37.5 & 0.0 & 0.0 & 13.3 \\
% GAIA2 SFT final: Execution 8/32, Search 22/32, Adaptability 3/32, Time 4/32; total 37/128.
% GAIA O1-SFT Legacy, 4-hour limit: L1 43/53, L2 52/86, L3 11/26; total 106/165.
& SFT & \textbf{81.1} & 60.5 & 42.3 & 64.2 & 25.0 & 68.8 & 9.4 & 12.5 & 28.9 \\
% GAIA2 DA-GRPO, mean of two runs: Execution 35.94%, Search 68.75%, Adaptability 6.25%, Time 12.50%; average 30.86%.
% GAIA DA-GRPO: L1 41/53, L2 58/86, L3 13/26; total 112/165 (67.88%).
& DA-GRPO & 77.4 & 67.4 & 50.0 & 67.9 & 35.9 & 68.8 & 6.3 & 12.5 & 30.9 \\
% GAIA1 + DPv4 RWR A, r32, 1 epoch; adapter saved: L1 42/53 (79.25%), L2 54/86 (62.79%), L3 11/26 (42.31%); total 107/165 (64.85%).
% GAIA2 RWR ep3: Execution 34.38%, Search 75.00%, Adaptability 9.38%, Time 12.50%; average 32.81%.
& SFT + RWR & 79.2 & 62.8 & 42.3 & 64.8 & 34.4 & 75.0 & 9.4 & 12.5 & 32.8 \\

\midrule
\multirow{3}{*}{\textbf{\aorchestra{}}}
& Qwen3.5-9B & 70.4 & 63.6 & 46.2 & 63.0 & 31.3 & 75.0 & 3.1 & 9.4 & 29.7 \\
& SFT & 76.7 & 69.8 & 57.7 & 70.1 & 35.9 & 75.0 & 26.0 & 18.8 & 39.1 \\
& SFT + RWR & 75.5 & 74.4 & 61.5 & 72.7 & 43.8 & 80.6 & 19.4 & 16.1 & 40.0 \\

\midrule
& Qwen3.5-9B & 73.0 & 66.3 & 60.3 & 67.5 & 31.3 & 65.6 & 12.9 & 10.0 & 29.7 \\
& SFT & \underline{80.2} & 70.9 & 61.5 & 72.4 & \underline{45.3} & 75.0 & 34.4 & 20.6 & 43.8 \\
\rowcolor{arporow}\cellcolor{white}
% Author update: GAIA2 OPOD Execution = 14/32 (43.75%), Adaptability = 37.5%; retaining Search and Time gives 49.2% overall.
& \textbf{OPOD} & \textbf{81.1} & \textbf{77.3} & \textbf{73.1} & \textbf{77.9} & 43.8 & \textbf{84.4} & \textbf{37.5} & \textbf{31.3} & \textbf{49.2} \\
\rowcolor{arporow}\cellcolor{white}
& \textit{w/o RWR} & \textbf{81.1} & \underline{75.6} & 65.4 & 75.8 & 40.6 & \underline{79.7} & \underline{35.9} & \underline{28.1} & \underline{46.1} \\
\rowcolor{arporow}\cellcolor{white}\multirow{-5}{*}{\textbf{\ourmethod{}}}
& \textit{w/o OPD} & \underline{80.2} & \underline{75.6} & \underline{69.2} & \underline{76.1} & \textbf{46.9} & \textbf{84.4} & 29.7 & 23.4 & \underline{46.1} \\

\bottomrule
\end{tabular}
\end{adjustbox}
\vspace{-1.3em}
\end{table*}

\noindent\textbf{Obs. \ding{183}: Proactive orchestration outperforms existing methods.}
With all three sub-agent backbones, \ourmethod{} achieves the highest average success rates on GAIA and GAIA2, as well as the highest success rate on SWE-Bench. In particular, on the more dynamic GAIA2 benchmark, it reaches 55.5\% with \modelname{DeepSeek-V4-Flash}, a 24.6\% relative improvement over the strongest baseline, ReAct, at 44.5\%. Notably, \ourmethod{}'s sub-agents use a ReAct-style execution mechanism. This shows that \ourmethod{} seamlessly integrates immediate harness adaptation into sub-agent execution, alleviates the adaptation delay of existing orchestration methods, and thereby more fully realizes the gains from orchestration on long-horizon tasks.

\subsection{Effectiveness of OPOD in Optimization (RQ2)}
\label{sec:exp-optimization}

\noindent\textbf{Obs. \ding{184}: OPOD effectively optimizes small orchestrators for \ourmethod{}.}
Table~\ref{tab:optimization} compares \modelname{Qwen3.5-9B} across orchestration frameworks and optimization methods. Overall, training improves small-orchestrator success on both benchmarks. Across frameworks, \ourmethod{} achieves the highest or joint-highest average success before training and retains this lead under the same training methods, preserving its framework advantage. With OPOD, \ourmethod{} further reaches 77.9\% on GAIA and 49.2\% on GAIA2, improving over its SFT initialization by 5.5 and 5.4 percentage points, respectively, and achieving the highest average success reported in the table.

\noindent\textbf{Obs. \ding{185}: Process feedback and outcome supervision play complementary roles in orchestrator optimization.}
Removing either OPD or RWR reduces average success on both benchmarks. In particular, without OPD, success drops by 7.8 and 7.9 percentage points on GAIA2's Adaptability and Time tasks. These results support the synergy between \ourmethod{}'s interleaved orchestration and OPOD on dynamic tasks: the former enables on-demand harness revisions during sub-agent execution; the latter converts feedback from these interactions into process-level supervision to improve orchestration. Removing RWR also lowers average success on GAIA and GAIA2 by 2.1 and 3.1 points, respectively, supporting the complementary value of task-level outcome supervision.

\begin{figure}[!t]
    \centering
    \scalebox{0.98}{%
    \begin{minipage}{\textwidth}
    \centering
    \includegraphics[width=\textwidth]{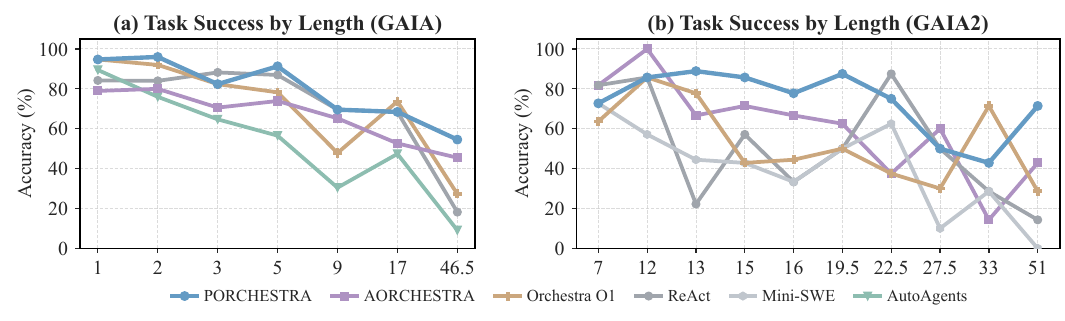}

    \vspace{-10pt}
    \input{pic/analysis/gaia2_dynamicity_layout.tex}
    \newsavebox{\dynamicitybox}
    \sbox{\dynamicitybox}{\dynamicitygraphic}
    \noindent\begin{minipage}[t]{0.448\textwidth}
        \vspace{0pt}
        \centering
        \fontfamily{ptm}\selectfont
        {\fontsize{8.16429284}{10}\selectfont\noindent\makebox[\linewidth][r]{\parbox{1.05\wd\dynamicitybox}{\centering\hspace*{18pt}\textbf{(c) Task Success Dynamic Level}\par}}\par}
        \vspace{0pt}
        \noindent\makebox[\linewidth][r]{\resizebox{1.05\wd\dynamicitybox}{\ht\dynamicitybox}{\usebox{\dynamicitybox}}}
    \end{minipage}\hfill
    \begin{minipage}[t]{0.532\textwidth}
        \vspace{0pt}
        \centering
        \fontfamily{ptm}\selectfont
        {\fontsize{8.16429284}{10}\selectfont\textbf{(d) Comparison of Feedback Mechanisms}\par}
        \vspace{0pt}
        \begingroup
        \fontsize{7}{8.4}\selectfont
        \setlength{\tabcolsep}{0.006\linewidth}
        \setlength{\arrayrulewidth}{0.25pt}
        \setlength{\aboverulesep}{1pt}
        \setlength{\belowrulesep}{1pt}
        \renewcommand{\arraystretch}{1.08}% Shorter rows; preserve the original font size.
        \setlength{\extrarowheight}{0pt}
        \newsavebox{\feedbackbox}
        \def\feedbacktable{%
        \begin{tabular}{@{}lccccc@{}}
        \toprule
        & \multicolumn{3}{c}{\textbf{Controlled intervention}}
        & \multicolumn{2}{c}{\textbf{End-to-end evaluation}} \\
        \cmidrule(lr){2-4}\cmidrule(l){5-6}
        \textbf{Mechanism}
        & Recovery $\uparrow$
        & Delay $\downarrow$
        & Cost $\downarrow$
        & Adaptability $\uparrow$
        & Overall $\uparrow$ \\
        \midrule
        % Trigger end task-level, two-run pooled average: Execution 27/64, Search 51/64, Adaptability 14/64 (21.88%), Time 17/64; overall 109/256 (42.58%).
        Trigger (end) & N/A & N/A & N/A & 21.9 & 42.6 \\
        % Trigger error, one GAIA2 mini128 run: Execution 16/32 (50.00%), Search 29/32 (90.625%), Adaptability 5/32 (15.625%), Time 5/32 (15.625%); overall 55/128 (42.96875%).
        Trigger (error) & N/A & N/A & N/A & 15.6 & 43.0 \\
        \hdashline[4pt/2pt]
        % Polling-5 task-level success: 45.31%; displayed as a ratio to two decimals.
        Polling (5) & 86.0 & 2.11 & 0.23 & 31.3 & 45.3 \\
        Polling (10) & 70.0 & 5.81 & 0.11 & 21.9 & 43.8 \\
        Polling (20) & 28.0 & 12.86 & 0.03 & 18.8 & 46.9 \\
        \hdashline[4pt/2pt]
        \rowcolor{arporow}
        \textbf{Proactive} & \textbf{92.0} & 1.67 & 0.09 & 46.9 & 55.5 \\
        \bottomrule
        \end{tabular}%
        }
        \sbox{\feedbackbox}{\feedbacktable}
        \noindent\resizebox{\linewidth}{!}{\usebox{\feedbackbox}}

        \endgroup
    \end{minipage}
    \end{minipage}%
    }\par
    \caption{Analysis of \ourmethod{}. Task success by (a,b) task length and (c) dynamicity. (d) Intervention mechanisms under controlled intervention and end-to-end evaluation. Recovery is in percent, delay in execution steps (including idle), and cost in CA per case.}
    \label{fig:length-accuracy}
    \label{fig:dynamicity-accuracy}
\vspace{-1.3em}
\end{figure}

\subsection{Cost Analysis (RQ3)}
\label{sec:exp-cost}
\noindent
\begin{wrapfigure}{r}{0.62\textwidth}
    \vspace{-\intextsep}
    \centering
    \includegraphics[width=\linewidth]{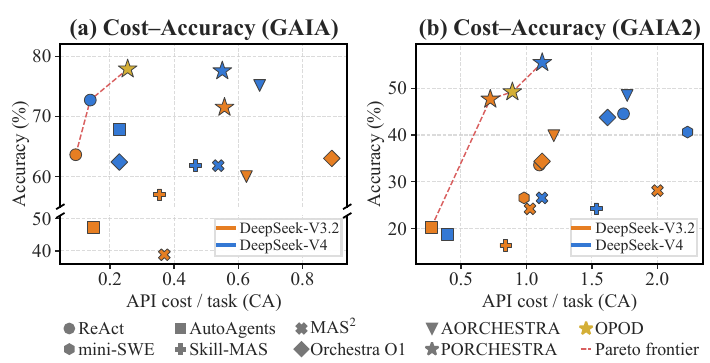}
    \par\vspace{0pt}
    \setlength{\abovecaptionskip}{0pt}
\caption{Cost--performance trade-offs on GAIA and GAIA2.}
    \label{fig:cost-pareto}
\end{wrapfigure}
\noindent\textbf{Obs. \ding{186}: \ourmethod{} improves the trade-off between task performance and inference cost.}
Figure~\ref{fig:cost-pareto} shows that \ourmethod{} achieves the highest success for each backbone on GAIA and GAIA2 at competitive cost. On GAIA2 with \modelname{DeepSeek-V4-Flash}, it exceeds ReAct by 11.0 percentage points at 36\% lower per-task API cost and Orchestra-o1 by 11.7 points at 31\% lower cost. Its efficiency stems from (1) independent sub-agent contexts and task-relevant tool sets, which substantially reduce context overhead amid GAIA2's large tool set and extensive application feedback; and (2) on-demand configuration changes during execution, which avoid wasted inference under mismatched harnesses.

\noindent\textbf{Obs. \ding{187}: OPOD extends the cost--performance frontier.}
Figure~\ref{fig:cost-pareto} shows the OPOD-trained \modelname{Qwen3.5-9B} orchestrator on the Pareto frontier of both GAIA and GAIA2. By reducing reliance on larger orchestrators, OPOD extends the frontier toward comparable performance at lower cost.

\subsection{Tracing the Gains from Proactive Orchestration (RQ4)}
\label{sec:exp-framework}

\noindent\textbf{Obs. \ding{188}: \ourmethod{} shows a clear advantage on longer and more dynamic tasks.}
(1) \textbf{Task length.} For each task solved by at least one method, we estimate its required length by the fewest environment actions among successful trajectories and group tasks accordingly. As shown in Figure~\ref{fig:length-accuracy}(a,b), \ourmethod{}'s advantage becomes more pronounced in longer task groups on both GAIA and GAIA2.
(2) \textbf{Task dynamicity.} Figure~\ref{fig:dynamicity-accuracy}(c) groups GAIA2 tasks by the number of future exogenous events. Notably, \ourmethod{} has its largest relative advantage in the group with two future events, representing moderate-to-high dynamicity. Together, these findings highlight the distinctive value of proactive orchestration for long-horizon, dynamic interaction tasks.

\noindent\textbf{Obs. \ding{189}: End-to-end performance gains trace to seamless adaptation enabled by proactive feedback.}
To trace \ourmethod{}'s performance gains, we replace proactive feedback with two triggering schemes: \textit{Trigger (end/error)} invokes the orchestrator when a sub-agent finishes or encounters an error; \textit{Polling (5/10/20)} re-orchestrates every 5, 10, or 20 execution steps. Figure~\ref{fig:dynamicity-accuracy}(d) compares them under controlled intervention and end-to-end evaluation. (1) \textbf{Controlled intervention.} We randomly revoke tool access during sub-agent execution to simulate tool unavailability. More frequent polling improves recovery and reduces response delay at higher cost. Proactive feedback achieves 92\% recovery with a mean delay of 1.67 steps. (2) \textbf{End-to-end evaluation.} On GAIA2, more frequent polling improves performance on the Adaptability subset but can disrupt execution continuity and reduce overall success; predefined triggers struggle to detect adjustment needs without explicit errors. By enabling timely adaptation with execution continuity, \ourmethod{} outperforms these variants overall, translating seamless intervention into higher task success.

\section{Conclusion}
We introduce \ourmethod{}, a proactive orchestration framework where sub-agents proactively communicate with the orchestrator at any logical point, enabling seamless harness adaptation in dynamic, long-horizon tasks. OPOD, in turn, converts this feedback into process-level supervision to optimize the orchestrator. Experiments across three benchmarks show improved task performance, favorable cost--performance trade-offs, and more effective training of small orchestrators.

\subsection*{AI use statement}
We used generative AI tools to assist with literature research, coding, translation and language editing, interpretation of experimental results, drafting figure captions and labels, and manuscript formatting. All AI-assisted content was reviewed by the authors, who take responsibility for the final paper, including its claims, analyses, and artifacts.

\subsection*{Reproducibility statement}
We provide the inference and training algorithms in Appendix~\ref{app:algorithms}, benchmark descriptions and implementation settings in Appendix~\ref{app:implementation}, theoretical proofs in Appendix~\ref{app:theory}, and prompt templates in Appendix~\ref{app:prompts-protocol}. An anonymous code repository is linked in the abstract.

\bibliography{iclr2027_conference}
\bibliographystyle{iclr2027_conference}

\appendix
\clearpage
\addtocontents{toc}{\protect\setcounter{tocdepth}{2}}
\begingroup
\renewcommand{\contentsname}{Appendix Contents}
\tableofcontents
\endgroup
\clearpage
\section{Algorithms}
\label{app:algorithms}

\begingroup
\algrenewcommand{\algorithmiccomment}[1]{\hfill\textcolor{gray}{$\triangleright$ #1}}
\algtext*{EndIf}
\algtext*{EndFor}
\algtext*{EndWhile}
\subsection{\ourmethod{} Inference}
\begin{algorithm}[H]
\caption{Interleaved orchestrator and sub-agent execution in \ourmethod{}.}
\label{alg:porchestra-inference}
\begin{algorithmic}[1]
\Require orchestrator $\pi^{\mathrm{orch}}$; sub-agent model $M_{\mathrm{sub}}$; execution budgets
\Statex \textbf{Input:} User goal $G$
\Statex \textbf{Output:} Final answer $y$, or accumulated results at the budget limit
\Statex
\Statex \textbf{Orchestrator: delegation and feedback-driven coordination}
\State $s^{\mathrm{orch}}\gets\operatorname{Initialize}(G)$
\State select $\texttt{Delegate}(\Phi_i)$; create sub-agent $i$ with $h_i\gets\varnothing$
\State mark sub-agent $i$ as ready to execute
\While{budgets permit}
    \If{a sub-agent $i$ is ready to execute}
        \State \textcolor{blue}{$o^{\mathrm{orch}}\gets\operatorname{SubAgent}(i)$} \Comment{Run until feedback or completion}
        \State update $s^{\mathrm{orch}}$ with $o^{\mathrm{orch}}$, worker status, and remaining budgets
    \EndIf
    \State $a^{\mathrm{orch}}\sim\pi^{\mathrm{orch}}(\cdot\mid s^{\mathrm{orch}})$ over valid actions
    \If{$a^{\mathrm{orch}}=\texttt{Revise}(\Phi_i')$ and sub-agent $i$ is paused}
        \State \textcolor{blue}{$\Phi_i\gets\Phi_i'$; apply the task and resource changes}
        \State append the applied changes as observation $o^{\mathrm{comm}}$ to $h_i$
        \State mark the same sub-agent $i$ as ready to resume \Comment{Preserve its local history}
    \ElsIf{$a^{\mathrm{orch}}=\texttt{Delegate}(\Phi_j)$}
        \State create sub-agent $j$ with $h_j\gets\varnothing$; set $i\gets j$ and mark it ready
    \ElsIf{$a^{\mathrm{orch}}=\texttt{Stop}(i)$}
        \State terminate sub-agent $i$ and update $s^{\mathrm{orch}}$ \Comment{No worker is ready}
        \State \textbf{continue} \Comment{Orchestrator decides again}
    \ElsIf{$a^{\mathrm{orch}}=\texttt{Finish}(y)$}
        \State \Return $y$ \Comment{Complete the overall task}
    \EndIf
\EndWhile
\State \Return accumulated results in $s^{\mathrm{orch}}$
\Statex
\Statex \textbf{Sub-Agent$(i)$: local execution and proactive feedback}
\While{sub-agent $i$ has local and total execution budget}
    \State $a^{\mathrm{sub}}\gets M_{\mathrm{sub}}(I_{\mathrm{sub}}(\Phi_i,h_i,\mathcal A_i^{\mathrm{comm}}))$
    \If{$a^{\mathrm{sub}}\in\Omega_i$}
        \State $o^{\mathrm{env}}\gets\operatorname{Execute}(a^{\mathrm{sub}})$; append $(a^{\mathrm{sub}},o^{\mathrm{env}})$ to $h_i$
    \Else \Comment{\texttt{revise}, \texttt{submit}, or \texttt{stop}}
        \State \textcolor{blue}{$o^{\mathrm{orch}}\gets(\text{status},\text{request},\text{evidence})$ from $a^{\mathrm{sub}}$ and $h_i$}
        \State append $o^{\mathrm{orch}}$ to $h_i$
        \If{$a^{\mathrm{sub}}=\texttt{revise}$}
            \State pause sub-agent $i$, preserving $(\Phi_i,h_i)$ \Comment{Await the orchestrator's adjustment}
        \Else
            \State close sub-agent $i$ as completed (\texttt{submit}) or terminated (\texttt{stop})
        \EndIf
        \State \textcolor{blue}{\Return $o^{\mathrm{orch}}$ to the orchestrator} \Comment{Sub-agent initiates the handoff}
    \EndIf
\EndWhile
\State pause sub-agent $i$; \Return budget-exhaustion feedback to the orchestrator
\end{algorithmic}
\end{algorithm}
Algorithm~\ref{alg:porchestra-inference} describes inference from the complementary perspectives of the orchestrator and the sub-agent. The orchestrator delegates tasks and responds to feedback, while the sub-agent executes local actions and proactively returns control when a harness adjustment is needed. A revision updates a paused worker's configuration and records the applied changes in its local history, allowing the same worker to resume with its accumulated context.

\clearpage
\subsection{OPOD Training}
\begin{algorithm}[H]
\caption{Feedback-guided policy optimization with OPOD.}
\label{alg:opod-training}
\begin{algorithmic}[1]
\Require expert data $\mathcal D_{\mathrm{SFT}}$; task distribution $\mathcal D$; OPD weight $\lambda$
\Statex \textbf{Input:} Base model $M_{\theta_0}$
\Statex \textbf{Output:} Optimized orchestration policy $\pi_\theta$
\State $\theta_{\mathrm{SFT}}\gets\operatorname{SFT}(\theta_0,\mathcal D_{\mathrm{SFT}})$; $\theta\gets\theta_{\mathrm{SFT}}$ \Comment{Initialize the student}
\State freeze $M_{\theta_0}$ and $M_{\theta_{\mathrm{SFT}}}$ \Comment{Base GRM and SFT teacher}
\State $\mathcal B\gets\operatorname{Collect}(\pi_{\theta_{\mathrm{SFT}}},\mathcal D)$ \Comment{Inference process (Algorithm~\ref{alg:porchestra-inference})}
\State $\mathcal D_R\gets\operatorname{SuccessfulDecisions}(\mathcal B)$; $\mathcal D_P\gets\varnothing$
\For{each delegation or revision attribution unit $u_t$ in $\mathcal B$}
    \State \textcolor{blue}{$(\texttt{FLAG}_t,\mathcal H_t)\gets M_{\theta_0}(I_{\mathrm{GRM}}(u_t))$} \Comment{Feedback attribution}
    \If{$\texttt{FLAG}_t=\texttt{true}$}
        \State $\mathcal D_P\gets\mathcal D_P\cup\{(s_t^{\mathrm{orch}},\mathcal H_t)\}$
    \EndIf
\EndFor
\For{each minibatch from $\operatorname{PairByContext}(\mathcal D_R,\mathcal D_P)$}
    \State $\mathcal L_R\gets\operatorname{RWR}(\pi_\theta;\text{successful-decision targets})$ \Comment{Outcome supervision}
    \For{each OPD record $(s,\mathcal H)$ in the minibatch}
        \State $a\sim\pi_\theta(\cdot\mid s)$ \Comment{Current-student generation}
        \State $p_k\gets\pi_\theta(\cdot\mid s,a_{<k})$; $q_k\gets\pi_{\theta_{\mathrm{SFT}}}^{+}(\cdot\mid s,\mathcal H,a_{<k})$
        \State \textcolor{blue}{$\ell_P(s)\gets\frac{1}{|a|}\sum_k D_{\mathrm{KL}}(p_k\|\operatorname{sg}[q_k])$} \Comment{Feedback-guided distillation}
    \EndFor
    \State $\mathcal L_P\gets\operatorname{Mean}_{s}[\ell_P(s)]$; $\theta\gets\operatorname{OptimizerStep}(\theta,\mathcal L_R+\lambda\mathcal L_P)$
\EndFor
\State \Return $\pi_\theta$
\end{algorithmic}
\end{algorithm}
Algorithm~\ref{alg:opod-training} summarizes how OPOD constructs and uses complementary outcome and process supervision. We first collect execution trajectories with the SFT policy, extracting decisions from successful trajectories as RWR targets and valid feedback hints from attribution units as OPD supervision. After pairing these records by decision context, we optimize the student on the resulting fixed dataset: RWR reinforces the stored successful decisions, while OPD guides actions newly generated by the current student at the stored contexts toward the frozen SFT teacher conditioned on the corresponding feedback hints. The OPD loss is computed as full-vocabulary reverse KL along these student-generated actions and combined with RWR to update the student. Loss normalization and supervision masks are specified in Section~\ref{app:opod-implementation}.
\endgroup

\clearpage
\begingroup
\raggedbottom
\setlength{\intextsep}{4pt}
\setlength{\abovecaptionskip}{3pt}
\setlength{\belowcaptionskip}{0pt}
\section{Additional Experiments and Analysis}
\label{app:additional-experiments}

We extend our evaluation through four additional research questions:
\begin{itemize}[leftmargin=1.5em, itemsep=0pt, topsep=0pt]
    \item \textbf{RQ5:} How sensitive is OPOD to the weight of process-level supervision? We vary $\lambda$ on GAIA2 (Section~\ref{app:hyperparameter-analysis}).
    \item \textbf{RQ6:} How do harness revision capabilities and communication protocol components contribute to proactive orchestration? We ablate these components individually (Section~\ref{app:detailed-ablation}).
    \item \textbf{RQ7:} Do savings in orchestration and execution offset the cost of proactive communication? We break down inference costs on GAIA and GAIA2 (Section~\ref{app:communication-cost}).
    \item \textbf{RQ8:} Which distillation objective is best suited to OPOD? We compare reverse KL, forward KL, and JSD on GAIA2 (Section~\ref{app:kl-direction}).
\end{itemize}

\subsection{Effect of OPD Weight (RQ5)}
\label{app:hyperparameter-analysis}

\begin{figure}[H]
\centering
\includegraphics[width=0.44\textwidth]{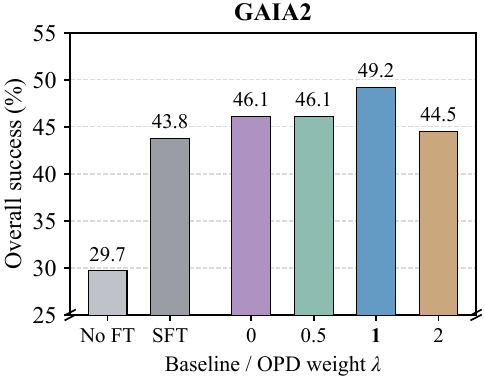}
\caption{Hyperparameter analysis of $\lambda$ on GAIA2.}
\label{fig:opd-weight}
\end{figure}

\noindent\textbf{Obs. \ding{190}: A moderate OPD weight balances process and outcome supervision.}
We vary $\lambda\in\{0,0.5,1,2\}$ on GAIA2 to assess sensitivity to the relative strength of process and outcome supervision, with $\lambda=0$ removing OPD. Figure~\ref{fig:opd-weight} shows that $\lambda=1$ achieves the highest success among the tested settings, suggesting a favorable balance between the two signals. Smaller weights may underuse process feedback, while larger weights may overemphasize the self-teacher relative to terminal rewards. Across the three nonzero settings, success ranges from 44.5\% to 49.2\%: a spread of 4.7 percentage points over a fourfold change in weight. This indicates moderate sensitivity within the tested range and supports $\lambda=1$ as our default.

\subsection{Harness Revision and Communication Ablations (RQ6)}
\label{app:detailed-ablation}
\label{app:protocol-ablation}

\begin{figure}[H]
\centering
\includegraphics[width=0.84\textwidth]{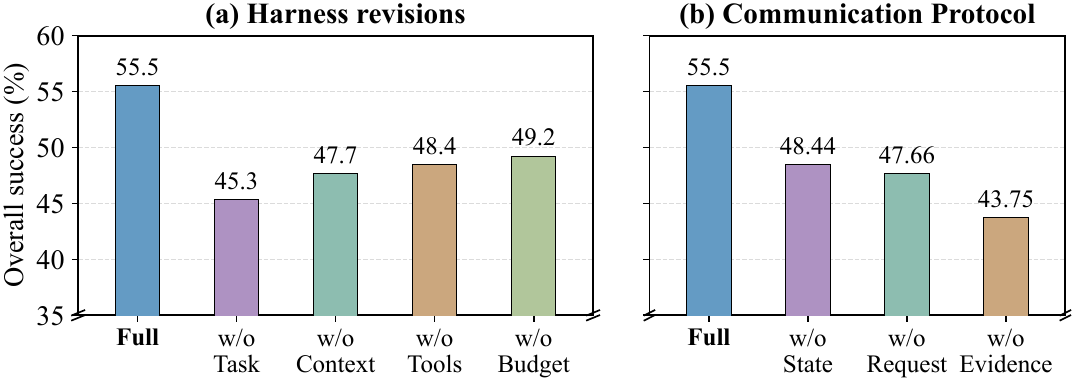}
\caption{Ablations on GAIA2: (a) harness revisions and (b) communication protocol components. Protocol ablation scores average two runs.}
\label{fig:detailed-ablation}
\label{fig:protocol-ablation}
\end{figure}

% Author confirmed on 2026-09-26: Full and all four harness ablations use
% Gemini-3-Flash as orchestrator and DeepSeek-V4-Flash as execution backbone.
% All four ablation counts are recorded in pic/analysis/gaia2_detailed_ablation_data.json.

\subsubsection*{Harness Revision Ablation}
\noindent\textbf{Obs. \ding{191}: Task and resource revisions contribute to effective proactive orchestration.}
On GAIA2, we disable revisions to task scope, working context, tools, or execution budget one at a time, preserving each component's initial configuration and the other revision capabilities. The context ablation still allows local tool observations to accumulate. Figure~\ref{fig:detailed-ablation}(a) shows that removing any revision capability reduces task success. Task revision has the largest observed effect, with a 10.2-point drop, highlighting the value of adjusting delegated subgoals as requirements evolve. Disabling context, tool, or budget updates yields drops of 7.8, 7.1, and 6.3 points, respectively. The losses persist with the other three capabilities retained, indicating that task and resource revisions provide complementary support during execution.

\input{appendix/protocol_ablation}

\par\addvspace{10pt}
\subsection{Communication Cost Analysis (RQ7)}
\label{app:feedback-analysis}
\label{app:communication-cost}

\setlength{\intextsep}{8pt}
\begin{table}[H]
\centering
\small
\setlength{\tabcolsep}{2.5pt}
\setlength{\arrayrulewidth}{0.25pt}
\setlength{\belowcaptionskip}{6pt}
\renewcommand{\arraystretch}{1.11}
\caption{Cost breakdown on GAIA and GAIA2 (CA per task).}
\label{tab:feedback-statistics}
\label{tab:communication-cost}
\begin{tabular}{
L{\dimexpr0.15\textwidth-2\tabcolsep\relax}
L{\dimexpr0.21\textwidth-2\tabcolsep\relax}
C{\dimexpr0.18\textwidth-2\tabcolsep\relax}
C{\dimexpr0.20\textwidth-2\tabcolsep\relax}
C{\dimexpr0.15\textwidth-2\tabcolsep\relax}
C{\dimexpr0.10\textwidth-2\tabcolsep\relax}}
\toprule
\textbf{Benchmark} & \textbf{Method} & \textbf{Orchestration} & \textbf{Communication} & \textbf{Sub-agent} & \textbf{Total} \\
\midrule
& \aorchestra{} & 0.236 & --- & 0.390 & 0.625 \\
& Orchestra-o1 & 0.201 & --- & 0.690 & 0.891 \\
\cdashline{2-6}[4pt/2pt]
\rowcolor{arporow}\cellcolor{white}\multirow{-3}{*}{\textbf{GAIA}}
& \textbf{\ourmethod{}} & 0.151 & 0.193 & 0.213 & \textbf{0.557} \\
\midrule
& \aorchestra{} & 0.208 & --- & 1.001 & 1.209 \\
& Orchestra-o1 & 0.244 & --- & 0.875 & 1.120 \\
\cdashline{2-6}[4pt/2pt]
\rowcolor{arporow}\cellcolor{white}\multirow{-3}{*}{\textbf{GAIA2}}
& \textbf{\ourmethod{}} & 0.097 & 0.065 & 0.559 & \textbf{0.721} \\
\bottomrule
\end{tabular}
\end{table}

% Extend the filled observation marker beyond the ten numerals in pifont.
\noindent\textbf{Obs. \makebox[0.9em]{\makebox[0pt]{\ding{108}}\makebox[0pt]{\raisebox{0.25ex}{\textcolor{white}{\fontsize{5.5}{5.5}\selectfont 12}}}}: Lower orchestration and execution costs offset communication overhead.}
Table~\ref{tab:communication-cost} compares per-task costs using \modelname{Gemini-3-Flash} as the orchestrator and \modelname{DeepSeek-V3.2} as the executor on both benchmarks. We separate routine orchestration, proactive communication, and sub-agent execution into non-overlapping components, using estimates for GAIA communication. \ourmethod{} spends less on routine orchestration and sub-agent execution than both baselines. These savings exceed the expense of generating and handling feedback, yielding the lowest total inference cost on both benchmarks. The breakdown shows that proactive communication can support efficient execution even after its own overhead is included.

\par\addvspace{10pt}
\subsection{Distillation Objective Comparison (RQ8)}
\label{app:kl-direction}

\setlength{\intextsep}{8pt}
\begin{table}[H]
\centering
\small
\setlength{\tabcolsep}{2.5pt}
\setlength{\arrayrulewidth}{0.25pt}
\setlength{\belowcaptionskip}{6pt}
\renewcommand{\arraystretch}{1.11}
\caption{Distillation objective comparison on GAIA2. All values are success rates (\%). Reverse KL uses the full OPOD results from Table~\ref{tab:optimization}.}
\label{tab:kl-direction}
\begin{tabular}{
L{\dimexpr0.25\textwidth-2\tabcolsep\relax}
*{5}{C{\dimexpr0.15\textwidth-2\tabcolsep\relax}}}
\toprule
\textbf{Objective} & \textbf{Execution} & \textbf{Search} & \textbf{Adaptability} & \textbf{Time} & \textbf{Overall} \\
\midrule
% Author-provided Forward KL counts: 16/32, 23/32, 9/32, 7/32; total 55/128.
% Author-provided JSD counts: 13/32, 24/32, 12/32, 9/32; total 58/128.
% The unlabelled values 2 (Forward KL) and 1 (JSD) are not interpreted here.
Forward KL & \textbf{50.0} & 71.9 & 28.1 & 21.9 & 43.0 \\
JSD & 40.6 & 75.0 & \textbf{37.5} & 28.1 & 45.3 \\
\rowcolor{arporow}
\textbf{Reverse KL} & 43.8 & \textbf{84.4} & \textbf{37.5} & \textbf{31.3} & \textbf{49.2} \\
\bottomrule
\end{tabular}
\end{table}

\noindent\textbf{Obs. \makebox[0.9em]{\makebox[0pt]{\ding{108}}\makebox[0pt]{\raisebox{0.25ex}{\textcolor{white}{\fontsize{5.5}{5.5}\selectfont 13}}}}: Reverse KL best complements OPOD among the compared objectives.}
Our main experiments use reverse KL for the OPD loss, raising the question of whether this choice is well suited to OPOD. To examine this, we compare three formulations of the distillation loss: reverse KL, forward KL, and Jensen--Shannon divergence (JSD). Table~\ref{tab:kl-direction} reports their success rates on GAIA2. Reverse KL achieves the highest overall success, exceeding forward KL and JSD by 6.3 and 3.9 percentage points, respectively. These results support its use in our main joint-training configuration. One possible explanation lies in the local and retrospective nature of sub-agent feedback. Reverse KL penalizes student-favored actions assigned low probability by the feedback-augmented teacher. This targeted correction may complement RWR's reinforcement of successful actions.

\endgroup
\begingroup
\raggedbottom
\section{Case Studies}
\label{app:case-study}

We present five cases illustrating how proactive sub-agent feedback supports runtime adaptation and orchestrator optimization:
\begin{itemize}[leftmargin=1.5em, itemsep=0pt, topsep=0pt]
    \item \textbf{Tool adaptation:} Sub-agents proactively request tool revisions, prompting the orchestrator to adjust their tool configurations (Figure~\ref{fig:gaia1-execution-case}).
    \item \textbf{Task adaptation:} Sub-agents proactively request task revisions, prompting the orchestrator to adjust their task scope (Figure~\ref{fig:gaia2-email-case}).
    \item \textbf{Context allocation optimization:} OPOD supervises the orchestrator's context allocation at delegation (Figure~\ref{fig:opod-context-case}).
    \item \textbf{Tool configuration optimization:} OPOD supervises the orchestrator's tool configuration at delegation (Figure~\ref{fig:opod-archive-case}).
    \item \textbf{Revision policy optimization:} OPOD supervises how the orchestrator revises harnesses in response to sub-agent requests (Figure~\ref{fig:opod-domain-case}).
\end{itemize}

\subsection{Proactive Adaptation in Execution}
\label{app:execution-case}
\subsubsection{Tool Adaptation}
\input{appendix/cases/greenland_case}

\subsubsection{Task Adaptation}
\label{app:gaia2-execution-case}
\input{appendix/cases/gaia2_email_case}

\subsection{Process Supervision in Optimization}
\label{app:opod-case}
\input{appendix/cases/opod_cases}

\clearpage
\endgroup
\section{Proofs}
\label{app:theory}

\subsection{Proof of Proposition 1}

\paragraph{Proposition 1 (Restated).}
Suppose a sub-agent requires at least $H_{\min}$ steps to maintain coherent execution, while re-orchestration is permitted only after its lifecycle ends. If a mismatch occurs uniformly over horizon $H$, then the probability of intervention within a recovery window $\delta$ satisfies
\[
\Pr(D_H\leq\delta)\leq\min\left\{1,\frac{\delta}{H_{\min}}\right\}.
\]

\begin{proof}
We formalize the adaptation delay induced by lifecycle-level re-orchestration. Consider a sub-agent execution with a fixed maximum horizon $H$, during which the orchestrator cannot modify the active configuration until the sub-agent reaches its lifecycle boundary. Let $\tau\in[0,H)$ denote the time at which an execution-time mismatch emerges within the current sub-agent lifecycle. Under the asynchronous-arrival assumption in Proposition~\ref{prop:boundary-delay},
\begin{equation}
\tau\sim\operatorname{Unif}(0,H).
\end{equation}
Since re-orchestration is unavailable before the lifecycle boundary at $H$, the earliest admissible intervention occurs at time $H$. The resulting adaptation delay is therefore
\begin{equation}
D_H\triangleq H-\tau.
\end{equation}
For any $d\in[0,H]$, the cumulative distribution function of $D_H$ is
\begin{align}
F_{D_H}(d)
&=\Pr(D_H\leq d) \notag\\
&=\Pr(H-\tau\leq d) \notag\\
&=\Pr(\tau\geq H-d) \notag\\
&=\int_{H-d}^{H}\frac{1}{H}\,\mathrm{d}\tau \notag\\
&=\frac{d}{H}.
\end{align}
Together with $F_{D_H}(d)=0$ for $d<0$ and $F_{D_H}(d)=1$ for $d\geq H$, this yields
\begin{equation}
D_H\sim\operatorname{Unif}(0,H).
\end{equation}
The expected adaptation delay is thus
\begin{align}
\mathbb E[D_H]
&=\int_0^H d\cdot\frac{1}{H}\,\mathrm{d}d \notag\\
&=\frac{1}{H}\left[\frac{d^2}{2}\right]_0^H \notag\\
&=\frac{H}{2}.
\end{align}
For completeness, its second moment is
\begin{equation}
\mathbb E[D_H^2]
=\int_0^H d^2\cdot\frac{1}{H}\,\mathrm{d}d
=\frac{H^2}{3},
\end{equation}
which gives
\begin{equation}
\operatorname{Var}(D_H)
=\mathbb E[D_H^2]-\mathbb E[D_H]^2
=\frac{H^2}{3}-\frac{H^2}{4}
=\frac{H^2}{12}.
\end{equation}
Hence, lifecycle-level re-orchestration structurally ties both the magnitude and dispersion of adaptation delay to the sub-agent horizon $H$.

We next characterize timely intervention. Let $\delta>0$ denote the effective recovery window of a mismatch, such that an orchestration adjustment is considered timely only if it becomes available within $\delta$ steps after the mismatch emerges. From the cumulative distribution above,
\begin{equation}
\Pr(D_H\leq\delta)
=F_{D_H}(\delta)
=\min\left\{1,\frac{\delta}{H}\right\}.
\end{equation}
In particular, when $\delta<H$,
\begin{equation}
\Pr(D_H\leq\delta)=\frac{\delta}{H},
\qquad
\frac{\partial}{\partial H}\Pr(D_H\leq\delta)
=-\frac{\delta}{H^2}<0.
\end{equation}
Thus, within lifecycle-level re-orchestration, increasing the temporal extent of a sub-agent necessarily decreases the probability that an asynchronously occurring mismatch can receive an intervention within a fixed recovery window.

Now suppose that maintaining a meaningful temporally extended sub-agent execution requires $H\geq H_{\min}$. Among all lifecycle-level re-orchestration schemes satisfying this requirement, define the maximum attainable timely-intervention probability as
\begin{equation}
C_{\mathrm{life}}^\star(\delta)
\triangleq
\sup_{H\geq H_{\min}}\Pr(D_H\leq\delta).
\end{equation}
Since the timely-intervention probability is non-increasing in $H$, the supremum is attained at the smallest admissible horizon $H=H_{\min}$. Therefore,
\begin{equation}
C_{\mathrm{life}}^\star(\delta)
=\sup_{H\geq H_{\min}}\min\left\{1,\frac{\delta}{H}\right\}
=\min\left\{1,\frac{\delta}{H_{\min}}\right\}.
\end{equation}
Consequently, every lifecycle-level re-orchestration scheme that preserves the required sub-agent temporal horizon satisfies
\begin{equation}
\boxed{
\Pr(D_H\leq\delta)
\leq
\min\left\{1,\frac{\delta}{H_{\min}}\right\}.
}
\end{equation}

For $H_{\min}\gg\delta$, lifecycle-level re-orchestration has maximum timely-intervention probability
\begin{equation}
C_{\mathrm{life}}^\star(\delta)
=\frac{\delta}{H_{\min}}\ll 1.
\end{equation}
Equivalently, defining the unavoidable timely-intervention gap as
\begin{equation}
\Gamma_{\mathrm{life}}(\delta)
\triangleq 1-C_{\mathrm{life}}^\star(\delta),
\end{equation}
we obtain, for $\delta<H_{\min}$,
\begin{equation}
\boxed{
\Gamma_{\mathrm{life}}(\delta)
=1-\frac{\delta}{H_{\min}}>0.
}
\end{equation}
Importantly, this gap follows from the lifecycle-level intervention structure rather than from the quality of the orchestrator's decision policy: before the lifecycle boundary is reached, no policy can revise the active sub-agent configuration without an admissible intervention point.

\ourmethod{} removes this boundary-induced constraint by augmenting the sub-agent action space with proactive communication. Let $\tau_{\mathrm{comm}}$ denote the first local decision point after mismatch emergence at which the sub-agent policy chooses to communicate and request adaptation. Its intervention delay takes the form
\begin{equation}
D_{\mathrm{PORCH}}=\tau_{\mathrm{comm}}-\tau,
\end{equation}
rather than the residual lifecycle duration $D_H=H-\tau$. Therefore, $D_{\mathrm{PORCH}}$ is governed by mismatch detection and the sub-agent's execution and communication policy, and is not structurally constrained to scale with the predefined lifecycle horizon $H$. In particular, reducing the intervention timescale no longer requires reducing the horizon over which the same sub-agent can maintain a coherent local execution trajectory. This establishes the claimed temporal decoupling.
\end{proof}

\subsection{Proof of Proposition 2}

\paragraph{Proposition 2 (Restated).}
Under bounded attribution noise, terminal rewards propagate through an attribution horizon $O(H_{\mathrm{orch}})$ times longer than that of OPD feedback, thereby accumulating noise over a longer execution chain. Furthermore, when the bias of OPD feedback is bounded, there exists a weight $\lambda>0$ such that OPOD yields lower expected error relative to ideal process-level supervision than RWR alone.

\begin{proof}
We analyze the learning signal associated with an orchestration decision at round $t$ through a bias--variance decomposition. Throughout, we condition on the current decision context and model parameters, treating the ideal update, attribution horizons, and process-supervision bias as fixed. Let $g_t^\star$ denote the ideal decision-level orchestration update direction obtained by evaluating the current decision without downstream stochasticity or attribution error. Here, we represent ideal process-level supervision by $g_t^\star$ and measure error as the expected squared distance of the normalized update from this reference. The proof uses the following sufficient conditions: stochastic attribution perturbations are bounded, zero-mean martingale differences; RWR and OPD perturbations are conditionally uncorrelated; the OPD bias is bounded; and terminal supervision has non-zero variance.

We first compare the attribution horizons of outcome-level and process-level supervision. Let $H_{\mathrm{orch}}^{(t)}$ denote the number of remaining orchestration rounds after decision $t$, and let $H_{\mathrm{sub}}$ denote the characteristic execution horizon of a sub-agent. A terminal reward associated with decision $t$ is observed only after all subsequent orchestration rounds and their corresponding sub-agent executions have completed. Assuming that the remaining orchestration rounds have comparable sub-agent execution lengths, its attribution horizon scales as
\begin{equation}
L_t^R=\Theta((1+H_{\mathrm{orch}}^{(t)})H_{\mathrm{sub}}).
\end{equation}
By contrast, \ourmethod{} returns proactive feedback during the execution of the current sub-agent. When feedback arrives after a comparable fraction of the current execution, the corresponding OPD attribution horizon scales as
\begin{equation}
L_t^P=\Theta(H_{\mathrm{sub}}).
\end{equation}
Consequently,
\begin{equation}
\frac{L_t^R}{L_t^P}=\Theta\!\left(1+H_{\mathrm{orch}}^{(t)}\right),
\end{equation}
where the additive one accounts for the current execution. This gives an $O(H_{\mathrm{orch}})$ attribution-horizon ratio over a task with $H_{\mathrm{orch}}$ orchestration rounds under these comparable-length assumptions.

We next characterize horizon-induced stochasticity. Suppose each downstream sub-agent--environment interaction introduces a zero-mean attribution perturbation. For terminal rewards,
\begin{equation}
g_t^R=g_t^\star+\sum_{k=1}^{L_t^R}\xi_{t,k}^R,
\end{equation}
where
\begin{equation}
\mathbb E[\xi_{t,k}^R\mid\mathcal F_{t,k-1}]=0,
\qquad
\mathbb E[\|\xi_{t,k}^R\|^2\mid\mathcal F_{t,k-1}]\leq\sigma_R^2.
\end{equation}
Here, $\mathcal F_{t,k}$ denotes the execution history up to the $k$-th downstream interaction. Under the martingale-difference assumption, cross-step covariance terms vanish, yielding
\begin{align}
V_R
&\triangleq\mathbb E[\|g_t^R-g_t^\star\|^2] \notag\\
&=\mathbb E\!\left[\left\|\sum_{k=1}^{L_t^R}\xi_{t,k}^R\right\|^2\right] \notag\\
&\leq L_t^R\sigma_R^2.
\end{align}
The variance bound grows linearly with the attribution horizon.

The OPD signal is generated from proactive feedback returned within the current sub-agent execution. Because this feedback is converted into a natural-language gradient and then into a feedback-augmented proxy teacher, it may introduce systematic attribution bias. We therefore write
\begin{equation}
g_t^P=g_t^\star+b_t+\sum_{k=1}^{L_t^P}\xi_{t,k}^P,
\end{equation}
where $b_t$ denotes the bias induced by the process-level proxy supervision and $\|b_t\|\leq B$ for some finite $B$. Similarly, assume
\begin{equation}
\mathbb E[\xi_{t,k}^P\mid\mathcal G_{t,k-1}]=0,
\qquad
\mathbb E[\|\xi_{t,k}^P\|^2\mid\mathcal G_{t,k-1}]\leq\sigma_P^2.
\end{equation}
Then
\begin{equation}
V_P\triangleq\mathbb E[\|g_t^P-\mathbb E[g_t^P]\|^2]
\leq L_t^P\sigma_P^2.
\end{equation}
Thus, the variance bounds scale with the respective attribution horizons: up to $O(H_{\mathrm{orch}}H_{\mathrm{sub}})$ for terminal supervision and $O(H_{\mathrm{sub}})$ for process supervision, with the latter also introducing finite bias $b_t$.

We now analyze their combination. The OPOD objective
\begin{equation}
\mathcal L_{\mathrm{OPOD}}=\mathcal L_{\mathrm{RWR}}+\lambda\mathcal L_{\mathrm{OPD}}
\end{equation}
induces the raw update $g_t^{\mathrm{raw}}=g_t^R+\lambda g_t^P$. To compare the signals at a common scale, we analyze the normalized update
\begin{equation}
g_t^{\mathrm{joint}}=\frac{g_t^R+\lambda g_t^P}{1+\lambda}.
\end{equation}
This normalization preserves the raw update direction. Define
\begin{equation}
\alpha=\frac{\lambda}{1+\lambda}\in[0,1),
\qquad
g_t^{\mathrm{joint}}=(1-\alpha)g_t^R+\alpha g_t^P.
\end{equation}
Let
\begin{equation}
\varepsilon_t^R=g_t^R-g_t^\star,
\qquad
\varepsilon_t^P=g_t^P-g_t^\star-b_t.
\end{equation}
Then
\begin{equation}
g_t^{\mathrm{joint}}-g_t^\star=(1-\alpha)\varepsilon_t^R+\alpha\varepsilon_t^P+\alpha b_t.
\end{equation}
By assumption,
\begin{equation}
\mathbb E[\langle\varepsilon_t^R,\varepsilon_t^P\rangle]=0.
\end{equation}
Since both stochastic components are zero mean, their cross terms with $b_t$ also vanish in expectation. Therefore,
\begin{align}
\mathcal E_t(\alpha)
&\triangleq\mathbb E[\|g_t^{\mathrm{joint}}-g_t^\star\|^2] \notag\\
&=(1-\alpha)^2V_R+\alpha^2V_P+\alpha^2\|b_t\|^2.
\end{align}
Using terminal-reward RWR alone corresponds to $\alpha=0$ and gives $\mathcal E_t(0)=V_R$. Moreover,
\begin{equation}
\left.\frac{\partial\mathcal E_t(\alpha)}{\partial\alpha}\right|_{\alpha=0}=-2V_R<0,
\end{equation}
where the strict inequality follows from $V_R>0$. Since $\|b_t\|<\infty$ and $V_P<\infty$, continuity implies that some sufficiently small $\alpha>0$ satisfies $\mathcal E_t(\alpha)<\mathcal E_t(0)$. By the one-to-one mapping
\begin{equation}
\lambda=\frac{\alpha}{1-\alpha},
\end{equation}
there exists a non-zero $\lambda>0$ satisfying
\begin{equation}
\boxed{
\mathbb E[\|g_t^{\mathrm{joint}}-g_t^\star\|^2]
<\mathbb E[\|g_t^R-g_t^\star\|^2].
}
\end{equation}

Under the stated assumptions, a suitable positive OPD weight reduces the normalized update's expected squared error. The shorter feedback horizon limits the number of downstream interactions that can contribute attribution noise, while the bounded bias captures deviations of feedback-based supervision from the ideal update direction. Combining RWR and OPD preserves task-level outcome supervision while introducing local corrective signals, addressing sparse supervision and long-horizon attribution in orchestration-policy optimization.
\end{proof}

\clearpage
\section{Implementation Details}
\label{app:implementation}

\subsection{Baseline and Benchmark Details}

\subsubsection{Baseline Details}
We group inference-time baselines into \textit{single-agent systems}, \textit{offline orchestration}, and \textit{re-orchestration}, as in the main experiments. We evaluate them on GAIA, GAIA2, and SWE-Bench, and compare SFT, RWR, and DA-GRPO for orchestrator optimization on GAIA and GAIA2.

\paragraph{ReAct.}
ReAct~\citep{yao2023react} interleaves reasoning and actions: reasoning traces support planning and progress tracking, while actions obtain information from the environment to inform subsequent decisions. The full task is handled within one agent's interaction history, making it a single-agent baseline. We instantiate this reasoning--action loop with each benchmark's available tools.

\paragraph{Mini-SWE.}
Mini-SWE~\citep{yang2024sweagent} is a minimal software-engineering agent that uses shell commands and a linear message history to inspect, modify, and test code. Its planning and execution remain within a single agent, providing another single-agent baseline. We use its repository interaction on SWE-Bench and adapt its tool interface to GAIA and GAIA2 while retaining the same execution pattern.

\paragraph{Skill-MAS.}
Skill-MAS~\citep{lin2026skill} represents orchestration expertise as an evolvable meta-skill covering task decomposition, agent engineering, and workflow orchestration. At inference time, the skill guides the generation of a task-specific agent team and executable workflow before execution, corresponding to offline orchestration in our comparison. We evaluate its initial meta-skill and connect the generated executors to each benchmark's tools.

\paragraph{AutoAgents.}
AutoAgents~\citep{ijcai2024p3} constructs a task-specific agent team and initial execution plan during a drafting stage, specifying roles and tool assignments before execution. In our offline-orchestration comparison, we fix these roles and the generated workflow and connect tool-using agents to each benchmark environment. Individual agents retain the ability to reason over observations and plan local actions within their assigned tasks.

\paragraph{MAS$^2$.}
MAS$^2$~\citep{wang2026mas} uses a Generator--Implementer--Rectifier team to construct and adapt multi-agent systems. The Generator produces a workflow, the Implementer assigns model backbones, and the Rectifier modifies the system when predefined conditions, such as execution failures or resource limits, are triggered. It represents re-orchestration driven by explicit triggers. We instantiate its workflow operators with benchmark-specific executors; on SWE-Bench, these operators generate, review, and select patch candidates.

\paragraph{Orchestra-o1.}
Orchestra-o1~\citep{zhang2026orchestrao1omnimodalagentorchestration} combines modality-aware task decomposition, sub-agent specialization, and dependency-aware parallel execution. The orchestrator assigns instructions, context, tools, and model backends to independent subtasks, then uses their returned summaries to plan the next delegation round or produce a final answer. Re-orchestration therefore occurs between rounds of sub-agent execution. We retain this staged delegation; on SWE-Bench, workers generate independent repair candidates for refinement and final selection.

\paragraph{\aorchestra{}.}
\aorchestra{}~\citep{ruan2026aorchestra} represents each sub-agent as a four-tuple of instruction, context, tools, and model, enabling on-demand executor specialization. Its orchestrator alternates between delegating to a newly instantiated sub-agent and finishing the task, using returned execution results to configure subsequent delegations. This places re-orchestration at sub-agent return boundaries. We preserve this delegation-and-return mechanism while connecting executors to each benchmark's tool environment.

\paragraph{SFT.}
SFT denotes standard supervised fine-tuning on offline expert trajectories, maximizing the likelihood of expert actions given their decision contexts. In our experiments, we fine-tune \modelname{Qwen3.5-9B} on orchestration decisions collected on TaskCraft for GAIA and on training scenarios for GAIA2. Each framework has its own SFT checkpoint, trained on structurally valid decisions from successful and unsuccessful expert trajectories, and initializes subsequent optimization.

\paragraph{RWR.}
RWR denotes reward-weighted regression, weighting action log-likelihoods by trajectory rewards. With binary success rewards, its objective reduces to supervised learning on successful trajectories, providing an outcome-supervised baseline. We collect trajectories from each framework's SFT policy and fine-tune its orchestrator on successful decision records. We evaluate RWR with Orchestra-o1, \aorchestra{}, and \ourmethod{} on GAIA and GAIA2.

\paragraph{DA-GRPO.}
DA-GRPO~\citep{zhang2026orchestrao1omnimodalagentorchestration} is the decision-aligned group relative policy optimization method introduced with Orchestra-o1. It scores candidate decisions against expert references using a rubric for format correctness, action validity, tool reasonableness, and decision quality, then forms group-relative advantages for policy optimization. This provides process-level supervision without executing each candidate in the environment. On GAIA and GAIA2, we train Orchestra-o1 using eight candidates per group and \modelname{DeepSeek-V3.2} for rubric scoring.

\subsubsection{Benchmark Details}
Table~\ref{tab:data-splits} summarizes the evaluated subsets, available tools, tool counts, and evaluation procedures for the three benchmarks.

\begin{table}[H]
\centering
\small
\setlength{\tabcolsep}{4pt}
\renewcommand{\arraystretch}{1.15}
\setlength{\belowcaptionskip}{6pt}
\caption{Benchmark subsets, available tools, and evaluation procedures.}
\label{tab:data-splits}
\begin{tabular}{@{}>{\raggedright\arraybackslash}m{0.115\textwidth}>{\raggedright\arraybackslash}m{0.14\textwidth}>{\raggedright\arraybackslash}m{0.42\textwidth}>{\centering\arraybackslash}m{0.085\textwidth}>{\raggedright\arraybackslash}m{\dimexpr0.24\textwidth-8\tabcolsep\relax}@{}}
\toprule
\textbf{Benchmark} & \textbf{Evaluation subset} & \textbf{Tools} & \textbf{Tool count} & \textbf{Evaluation} \\
\midrule
GAIA & 2023 validation; 165 tasks & \texttt{GoogleSearchAction}, \texttt{ExtractUrlContentAction}, \texttt{ExecuteCodeAction}, \texttt{ImageAnalysisAction}, \texttt{ParseAudioAction}. & 5 & Answer matching + semantic check. \\
\midrule
GAIA2 & Mini subset; 128 scenarios & Examples: \texttt{Emails\_\_search\_emails}, \texttt{Calendar\_\_add\_calendar\_event}, \texttt{Shopping\_\_add\_to\_cart}, \texttt{SystemApp\_\_wait\_for\_notification}, \ldots & 101 & ARE scenario validation. \\
\midrule
SWE-Bench Verified & Fixed test subset; 100 issues & \texttt{open}, \texttt{scroll\_up}, \texttt{scroll\_down}, \texttt{goto}, \texttt{edit}, \texttt{str\_replace}, \texttt{insert}, \texttt{create}, \texttt{delete\_lines}, \texttt{search\_file}, \texttt{search\_dir}, \texttt{find\_file}, \texttt{state}, and \texttt{bash}. & 14 & Official patch evaluation. \\
\bottomrule
\end{tabular}
\end{table}

% Sources: gaia1_paper_handoff_20260923/EXPERIMENT_DETAILS_zh.md,
% gaia2_paper_details_20260923_lite/EXPERIMENT_DETAILS.md and scenarios/,
% swebench_methods_prompts_20260923/README.md and example_configs/.
% Tool counts: GAIA five-tool registry; GAIA2 101 unique App__tool entries
% in prompts/full_vs_harness/Full_main_first.txt, also confirmed in all 161
% observed main_agent.json tool catalogs in the GAIA2 handoff archive;
% SWE ACIToolManager.commands has 13 entries plus Bash fallback (mini: Bash).
\paragraph{GAIA.}
GAIA~\citep{mialon2024gaia} evaluates general-purpose assistants on real-world questions that combine multi-step reasoning, web research, tool use, and multimodal information processing. We evaluate the full 2023 validation set of 165 tasks, comprising 53 Level-1, 86 Level-2, and 26 Level-3 tasks, with task-provided attachments available to the agents. The evaluation implementation combines normalized numerical, list, and string matching against reference answers with an LLM-based semantic-equivalence fallback. Overall success is computed across all 165 tasks, weighted by task count across levels. For orchestrator training, we follow \aorchestra{}~\citep{ruan2026aorchestra} and use its released training set of 2,339 unique TaskCraft tasks.

\paragraph{GAIA2.}
GAIA2~\citep{froger2026gaia} benchmarks digital assistants on completing user requests in dynamic, asynchronous application environments. It is built on Meta Agents Research Environments (ARE)~\citep{froger2025arescalingagentenvironments}, which simulates stateful applications and delivers user messages and environmental events during execution. Tasks require agents to retrieve information and perform operations across applications such as email, calendars, messaging, and shopping, while adapting to incoming updates and temporal constraints. We evaluate four capability-specific subsets:
\begin{itemize}[leftmargin=1.5em,itemsep=2pt,topsep=3pt]
    \item \textbf{Execution:} Following multi-step instructions and producing the required application state changes, such as sending messages or updating records.
    \item \textbf{Search:} Locating and combining information distributed across applications, such as emails, contacts, and files, to answer a query or identify the correct entities for subsequent actions.
    \item \textbf{Adaptability:} Adjusting ongoing task execution in response to asynchronous user updates or environmental events that change the requirements or relevant information.
    \item \textbf{Time:} Performing actions within specified time windows or deadlines, including deciding when to act and when to wait for future notifications.
\end{itemize}
The evaluation subset contains 128 mini scenarios, with 32 per category. Training uses the remaining 512 scenarios from these four categories, with 128 per category. The available tool catalog contains 101 tools, including the examples in Table~\ref{tab:data-splits}. Each scenario initializes its own application state; actions and notifications within the scenario preserve preceding state changes. ARE's scenario validator checks required events and outcomes using task-specific rules and semantic checks. Overall success is the fraction of valid scenarios among the 128 evaluation tasks, with timeouts and missing outputs counted as unsuccessful.

\paragraph{SWE-Bench Verified.}
SWE-Bench~\citep{jimenez2024swebench} evaluates software-engineering agents on resolving real GitHub issues. Following \aorchestra{}~\citep{ruan2026aorchestra}, we evaluate a fixed subset of 100 issues from the Verified test split and adopt its containerized repository execution environment and patch-evaluation setup.

\subsection{Training Configuration}
\input{appendix/training_implementation}

\clearpage
\section{Prompts and Communication Protocol}
\label{app:prompts-protocol}

\input{appendix/core_prompts}

\raggedbottom
\section{Limitations and Future Work}
\label{app:limitations}

\noindent\textbf{(1)} We currently optimize only the orchestrator, while keeping sub-agent model parameters and communication rules fixed. This allows the orchestrator to dynamically create and configure sub-agents for different tasks. Our experiments show that this design works across multiple execution models. Strong models can often recognize when their current strategy needs adjustment and proactively inform the orchestrator. We consider learning a general proactive-communication capability that transfers across models, tasks, and dynamically generated sub-agents a promising direction.

\noindent\textbf{(2)} Our work focuses on runtime orchestration adjustments through proactive sub-agent feedback. In the complex interaction tasks we study, we observe that non-concurrent orchestration baselines outperform the concurrent baseline, motivating our choice to use a single active sub-agent at a time. However, concurrent execution still offers potential efficiency gains for independent subtasks. Future work could coordinate feedback and revision requests across concurrent sub-agents to improve execution efficiency while maintaining global task consistency.

\section{Ethics Statement}
\label{app:ethics}

This work studies proactive coordination and policy optimization for language-model agents using existing benchmarks. We do not conduct human-subject studies. Agents that revise their goals, tools, context, or execution budgets can introduce risks when deployed with access to external systems, including unintended actions, privacy exposure, and misuse of granted permissions. Proactive revision requests should therefore remain subject to explicit authorization boundaries, with human oversight for consequential actions. Deployment in external systems requires further evaluation of action safety, permission handling, and privacy protection. Any release of code or execution traces should respect applicable licenses and data-use terms and exclude credentials and sensitive information. We acknowledge and adhere to the ICLR Code of Ethics.

\end{document}

%% file: math_commands.tex
\usepackage{amsmath,amsfonts,bm}

\def\eqref#1{equation~\ref{#1}}
\def\1{\bm{1}}

\DeclareMathAlphabet{\mathsfit}{\encodingdefault}{\sfdefault}{m}{sl}
\SetMathAlphabet{\mathsfit}{bold}{\encodingdefault}{\sfdefault}{bx}{n}

%% file: pic/analysis/gaia2_dynamicity_layout.tex
\def\dynamicitygraphic{\includegraphics[width=0.4\textwidth,trim=0bp 25.680000bp 0bp 0.000000bp]{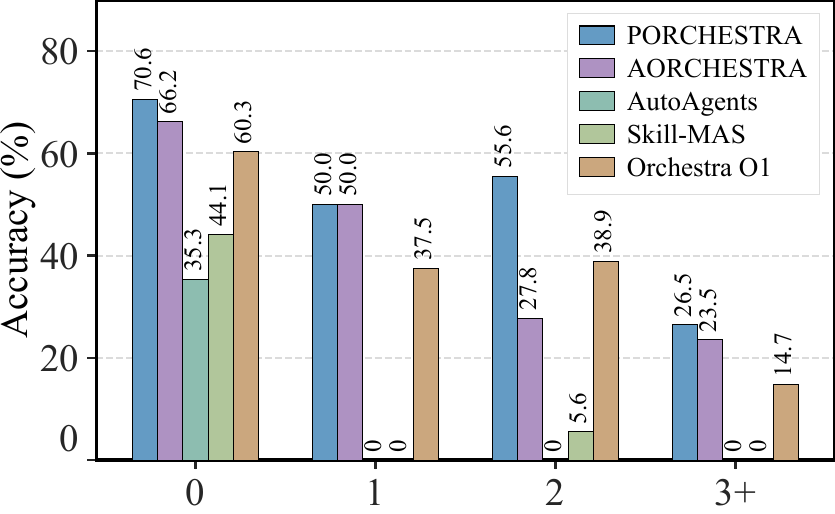}}

%% file: appendix/protocol_ablation.tex
% Source: experiments/protocol_ablation/gaia2_sae_ablation.json.
% Figure 6(b) shows protocol ablation means alongside the Full reference.
\subsubsection*{Communication Protocol Ablation}
\noindent\textbf{Obs. \makebox[0.9em]{\makebox[0pt]{\ding{108}}\makebox[0pt]{\raisebox{0.25ex}{\textcolor{white}{\fontsize{5.5}{5.5}\selectfont 11}}}}: Effective orchestration relies on all three communication components.}
We remove State, Request, or Evidence from the communication protocol one at a time on GAIA2 and average each ablation over two runs. Figure~\ref{fig:protocol-ablation}(b) shows that every removal reduces success relative to the full protocol. Without State, the orchestrator loses the sub-agent's summary of progress, uncertainty, and blockage, making it harder to understand why intervention is needed. Without Request, the orchestrator must infer the direction of adjustment from the remaining signals, without an explicit proposal from the sub-agent carrying out the task. Removing Evidence causes the largest drop: the orchestrator lacks the supporting tool observations needed to assess the reliability of the sub-agent's report and justify an update. These results highlight the complementary roles of execution awareness, adjustment guidance, and reliable supporting evidence in proactive orchestration.

%% file: appendix/cases/greenland_case.tex
% Source: experiments/case_studies/gaia1_dpv32_revise_cases_20260923/
% 3ff6b7a9-a5bd-4412-ad92-0cd0d45c0fee_20260701_162420.json.
% Machine-readable excerpts are extracted by scripts/extract_greenland_case.py.
% Source metadata: Figure~\ref{fig:gaia1-execution-case} presents a GAIA task requiring historical information retrieval. The orchestrator is \modelname{gemini-3-flash-preview}, and the sub-agent is \modelname{deepseek-v3.2}. The task ID is \texttt{3ff6b7a9-a5bd-4412-ad92-0cd0d45c0fee}. Action names and key inputs follow the log; long observations and communication messages are condensed, and repeated calls are omitted. Entry numbers refer to the original sub-agent trace, including communication actions.

\begingroup
\input{appendix/cases/case_style}
\begin{tcolorbox}[breakable,colback=white,colframe=black!35,arc=1mm,
title={GAIA case: tool adaptation},
title after break={GAIA case (continued)},
colbacktitle=arporow,coltitle=black,fonttitle=\bfseries,
boxsep=4pt,left=5pt,right=5pt,top=5pt,bottom=5pt]
\small
\setlength{\parskip}{4pt}
\casefield{User task}{\textit{The longest-lived vertebrate is named after an island. According to Wikipedia as of January 1, 2021, what is the 2020 estimated population of that island, to the nearest thousand?}}

\casestep{myblue}{myblue}{Communication}{1. Orchestrator: Initial delegation}{%
\casefield{Action}{\casecomm{delegate\_task}}
\casefield{Task input}{Identify the longest-lived vertebrate and its namesake island, then find the island's 2020 population estimate in Wikipedia as of January 1, 2021. The instruction suggests the Greenland shark and Greenland as likely targets.}
\casefield{Assigned tools}{\texttt{GoogleSearchAction}, \texttt{ExtractUrlContentAction}.}
}

\casestep{mypurple}{caseenv}{Environment interaction}{2. Sub-agent: Search}{%
\casefield{Action (entry 1)}{\casetool{GoogleSearchAction}}
\casefield{Input query}{\texttt{longest-lived vertebrate named after an island}}
\casefield{Observation}{The result identifies the Greenland shark as the longest-living vertebrate.}
\casefield{Action (entry 2)}{\casetool{GoogleSearchAction}}
\casefield{Input query}{\texttt{Greenland population 2020 Wikipedia January 1 2021}}
\casefield{Observation excerpt}{``As of 1 January \textbf{2024} the resident population of Greenland was estimated to be 56,699, an increase of 90 (0.2\%) from the previous year.''}
This search result does not meet the requested year and historical-source constraints.
}

\textcolor{black!60}{\textit{Repeated search attempts in entries 3--24 are omitted. The sub-agent tries date constraints, Wikipedia site restrictions, and infobox or revision-history terms before the next displayed call.}}

\casestep{mypurple}{caseenv}{Environment interaction}{3. Sub-agent: Page inspection}{%
\casefield{Action (entry 25)}{\casetool{ExtractUrlContentAction}}
\casefield{Input URL}{\url{https://en.wikipedia.org/wiki/Greenland}}
\casefield{Input \texttt{browse\_query}}{\texttt{2020 population estimate}}
\casefield{Observation}{The extractor reports that the \textbf{2020 estimate is not explicitly provided}. Its answer lists a 2025 estimate of 56,831 and a 2022 population of 56,583, concluding that it cannot provide the requested 2020 estimate from the available content.}
\casefield{Further search (entry 26)}{Another date-specific \casetool{GoogleSearchAction} returns ``No good Google Search Result was found.''}
}

\casestep{mypurple}{myblue}{Communication to Orchestrator}{4. Sub-agent: Revision request}{%
\casefield{Action (entry 27)}{\casecomm{revise}}
\casefield{\textcolor{myblue}{Status}}{Greenland has been identified, but the historical population estimate remains unresolved.}
\casefield{\textcolor{myblue}{Request}}{Provide access to \textbf{historical Wikipedia revisions} or archived versions around January 1, 2021.}
\casefield{\textcolor{myblue}{Evidence}}{Searches have not located the required snapshot, and extraction from the current page returns more recent population figures.}
}

\casestep{myblue}{myblue}{Communication to Sub-agent}{5. Orchestrator: Harness update}{%
\casefield{Action}{\casecomm{revise\_task}}
\casefield{Task update}{Keep the objective; retrieve the Wikipedia revision just before January 1, 2021.}
\casefield{Tool update}{Retain search and URL extraction; add \casetool{ExecuteCodeAction}.}
\casefield{Context update}{Supply a Python example for querying the MediaWiki revision API and reading population-related content.}
\casefield{Budget input}{\texttt{step\_budget = 15}.}
\casefield{Instruction excerpt}{``Use ExecuteCodeAction to query the MediaWiki API for the Greenland page revision just before 2021-01-01T00:00:00Z, retrieve its content, locate the 2020 population estimate inside the infobox, and report that figure to the nearest thousand.''}
}

\casestep{mypurple}{caseenv}{Environment interaction}{6. Sub-agent: Historical query}{%
\casefield{Action (entry 28)}{\casetool{ExecuteCodeAction}}
{\raggedright\casefield{Input}{MediaWiki query: \texttt{titles=Greenland; rvstart=2021-01-01T00:00:00Z; rvdir=older; rvlimit=1}.}}
\casefield{Output}{Population estimate: \textcolor{caseenv}{\textbf{56,081}}; rounded to the nearest thousand: \textcolor{caseenv}{\textbf{56,000}}.}
}

\casestep{mypurple}{myblue}{Communication to Orchestrator}{7. Sub-agent: Result return}{%
\casefield{Action (entry 29)}{\casecomm{finish}; return the population estimate and its supporting tool output.}
}
\casestep{myblue}{myblue}{Final answer}{8. Orchestrator: Completion}{%
\casefield{Action}{\casecomm{complete}, with \textcolor{caseenv}{\texttt{answer = "56,000"}}.}
\casefield{Recorded evaluation}{\textcolor{caseenv}{\texttt{success = true}}.}
}
\end{tcolorbox}
\captionof{figure}{Execution trace of proactive harness adaptation on GAIA.}
\label{fig:gaia1-execution-case}
\endgroup

Figure~\ref{fig:gaia1-execution-case} shows how execution feedback guides tool and context adaptation. Search and current-page extraction identify Greenland but fail to retrieve the required historical population estimate. The sub-agent requests access to historical revisions, prompting the orchestrator to add \texttt{ExecuteCodeAction} and MediaWiki API guidance. The same worker resumes with its accumulated search history, retrieves 56,081, and returns the correctly rounded answer of 56,000.

% Author note: the stored Python input contains doubled backslashes in parsing literals.
% The tool output is reproduced from the log; executor-side normalization has not been verified.
% This figure displays the exact API and execution parameters, not a reconstructed runnable script.

%% file: appendix/cases/case_style.tex
% Load within a group so the case macros remain local to each transcript.
\setlength{\belowcaptionskip}{8pt}
\definecolor{caseenv}{HTML}{287060}
\definecolor{casebaseline}{HTML}{707070}
\newcommand{\casefield}[2]{\par\noindent\textcolor{black!70}{\textbf{#1:}} #2\par}
\newcommand{\casecomm}[1]{\textcolor{myblue}{\texttt{#1}}}
\newcommand{\casetool}[1]{\textcolor{caseenv}{\texttt{#1}}}
% Role color, interaction color, interaction label, step heading, and step content.
\newcommand{\casestep}[5]{%
\par\noindent\begin{minipage}{\linewidth}
\small\setlength{\parskip}{2.5pt}\setlength{\fboxsep}{2pt}
\noindent\colorbox{#2!8}{\parbox{\dimexpr\linewidth-2\fboxsep\relax}{%
\textcolor{#1}{\textbf{#4}}\hfill\textcolor{#2}{\footnotesize\textbf{#3}}}}\par
{\color{#2!45}\hrule height 0.4pt}\vspace{2pt}
#5\end{minipage}\par\vspace{5pt}}

%% file: appendix/cases/gaia2_email_case.tex
% Evidence: appendix/cases/gaia2_email/evidence.json, generated by
% scripts/extract_gaia2_email_case.py from result histories and POrchestra LLM calls.
% Source metadata: Figure~\ref{fig:gaia2-email-case} presents a dynamic GAIA2 task with conditional follow-up actions. The scenario is \texttt{scenario\_universe\_22\_vhe87i}, using \modelname{gemini-3-flash-preview} for orchestration and \modelname{deepseek-v4-flash} for execution. Three successive notifications lead to three revisions of the same active sub-agent. Key action inputs and observed messages are retained; communication is condensed, and formatting retries and routine lookups are omitted.

\begingroup
\input{appendix/cases/case_style}
\begin{tcolorbox}[breakable,colback=white,colframe=black!35,arc=1mm,
title={GAIA2 case: task adaptation},
title after break={GAIA2 case (continued)},
colbacktitle=arporow,coltitle=black,fonttitle=\bfseries,
boxsep=4pt,left=5pt,right=5pt,top=5pt,bottom=5pt]
\small
\setlength{\parskip}{4pt}
\casefield{User task (condensed)}{Schedule ``Exclusive Wine Bar'' with Deniz Öztürk for the upcoming Sunday, 5--7 PM, tagged as work; send a ``Reservation Confirmed'' email and notify the user. If Deniz confirms within three minutes of the email, book the specified premium cab for Sunday at 4:30 PM and send a confirmation message to yesterday's training attendees in a new ``Client Confirmation'' group. Otherwise, take no further action.}

\casestep{myblue}{myblue}{Communication}{1. Orchestrator: Initial delegation}{%
\casefield{Action}{\casecomm{delegate\_task}}
\casefield{Assigned subtask}{Look up Deniz, create the calendar event, send the reservation email, and report completion to the user.}
\casefield{Initial tool scope}{Contact search, calendar creation, email sending, and user notification. Waiting, cab booking, and group messaging are not yet assigned.}
}

\casestep{mypurple}{caseenv}{Environment interaction}{2. Sub-agent: Initial app actions}{%
\casefield{Action}{\casetool{Calendar\_\_\allowbreak add\_calendar\_event}}
\casefield{Key inputs}{Title: ``Exclusive Wine Bar''; start/end: \texttt{2024-10-20 17:00:00 / 19:00:00}; tag: \texttt{work}; attendee: Deniz Öztürk.}
\casefield{Action}{\casetool{Emails\_\_\allowbreak send\_email}}
\casefield{Key inputs}{Recipient: \texttt{deniz.ozturk@gaia2mail.com}; subject: ``Reservation Confirmed''; content confirms the Sunday booking.}
\casefield{Observed result}{The tools return a calendar-event ID and a sent-email ID. The sub-agent informs the user that the initial actions are complete.}
}

\casestep{mypurple}{myblue}{Communication to Orchestrator}{3. Sub-agent: First follow-up request}{%
\casefield{Trigger}{A new-email notification arrives from Nalani Jensen.}
\casefield{Action}{\casecomm{revise}, with \texttt{task\_scope.operation=follow\_up}.}
\casefield{\textcolor{myblue}{Status}}{The assigned setup is complete; the new notification has not yet been acted on.}
\casefield{\textcolor{myblue}{Request}}{Review the notification and decide whether it requires further action.}
\casefield{\textcolor{myblue}{Evidence}}{The notification sender, the previously created event and email IDs, and Deniz's contact address are passed to the orchestrator.}
}

\casestep{myblue}{myblue}{Communication to Sub-agent}{4. Orchestrator: Extend the task scope}{%
\casefield{Action}{\casecomm{revise\_task}}
\casefield{Task update}{Check Nalani's message and ignore it if unrelated. Monitor for Deniz's confirmation, then perform the cab and group-message branch if the condition is met.}
\casefield{Resource update}{Add inbox reading, notification waiting, cab booking, calendar lookup, and group-messaging capabilities to the same sub-agent.}
\casefield{Preserved constraint}{No cab booking or group creation without the required confirmation.}
}

\casestep{mypurple}{myblue}{Communication to Orchestrator}{5. Sub-agent: Second follow-up request}{%
\casefield{Observed context}{\casetool{Emails\_\_\allowbreak list\_emails}, with \texttt{folder\_name=INBOX}, returns Nalani's message: ``Hey Aurélien, I'll be there on Sunday. Nalani''. It concerns Nalani, not Deniz. After a notification wait, a new email arrives from Aurelia.}
\casefield{Action}{\casecomm{revise}, with \texttt{task\_scope.operation=follow\_up}.}
\casefield{\textcolor{myblue}{Request}}{Is the new message relevant to the conditional actions? Should the worker check the inbox again or finish without those actions?}
\casefield{\textcolor{myblue}{Evidence}}{Nalani's sender and message, the new notification, and the identifiers from the initial setup.}
}

\casestep{myblue}{myblue}{Communication to Sub-agent}{6. Orchestrator: Refine the confirmation check}{%
\casefield{Action}{\casecomm{revise\_task}}
\casefield{Task and context update}{Inspect Aurelia's message and the inbox to determine whether Deniz, or someone on her behalf, confirmed attendance. Retain the tools and conditional cab/group instructions.}
\casefield{Subsequent observation}{Aurelia's message says, ``Thank you for the confirmation, Aurélien. I will try to be there.'' While the worker waits, a notification arrives from \texttt{deniz.ozturk@gaia2mail.com}.}
}

\casestep{mypurple}{myblue}{Communication to Orchestrator}{7. Sub-agent: Third follow-up request}{%
\casefield{Action}{\casecomm{revise}, with \texttt{task\_scope.operation=follow\_up}.}
\casefield{\textcolor{myblue}{Status}}{The worker believes the waiting window has elapsed, but the new notification may change whether the conditional actions are required.}
\casefield{\textcolor{myblue}{Request}}{Read and evaluate the new email, or stop because of the deadline? The worker requests guidance before taking further side effects.}
\casefield{\textcolor{myblue}{Evidence}}{The new notification and the earlier email contents and identifiers are carried forward.}
\textcolor{black!60}{\textit{The worker reports its uncertainty before committing to the next branch.}}
}

\casestep{myblue}{myblue}{Communication to Sub-agent}{8. Orchestrator: Authorize conditional follow-up}{%
\casefield{Action}{\casecomm{revise\_task}}
\casefield{Instruction excerpt}{``Verify the email from Deniz Öztürk (received at 07:02:36). If she confirmed her attendance, book the premium cab for Oct 20 at 4:30 PM, create the `Client Confirmation' Messages group with the attendees of yesterday's `Staff Training: Wine and Food Pairing' event, and send them the message that the VIP confirmed.''}
\casefield{Resource update}{Keep the same worker and its inbox, cab, calendar, and messaging tools.}
}

\casestep{mypurple}{caseenv}{Environment interaction}{9. Sub-agent: Execute the confirmed branch}{%
\raggedright
\casefield{Inbox check}{\casetool{Emails\_\_\allowbreak list\_emails}: \texttt{folder\_name=INBOX, offset=0, limit=10}.}
\casefield{Observed email}{From Deniz, timestamp \texttt{07:02:36}: ``Thanks Aurélien, I'll be there on Sunday at 5 PM. Looking forward to it!''}
\casefield{Cab action}{\casetool{Cabs\_\_\allowbreak order\_ride}; \texttt{service\_type=Premium}, \texttt{ride\_time=2024-10-20 16:30:00}; route: 9012 Park Ave to 4567 Broadway, New York. The tool returns \textcolor{caseenv}{\textbf{BOOKED}}.}
\casefield{Group actions}{Look up yesterday's training attendees, then call \casetool{Messages\_\_\allowbreak create\_group\_conversation} with title ``Client Confirmation'' and the retrieved user IDs. Call \casetool{Messages\_\_\allowbreak send\_message\_to\_group\_conversation} with content ``the VIP has confirmed and preparations should begin''.}
\casefield{Completion}{\casetool{AgentUserInterface\_\_\allowbreak send\_message\_to\_user} reports the completed actions. The recorded scenario evaluation is \textcolor{caseenv}{\textbf{Valid}}.}
}

\casestep{casebaseline}{casebaseline}{Paired reference run}{AOrchestra: End-of-subtask coordination}{%
\casefield{Recorded behavior}{The first worker schedules the event, sends the email, requests a 180-second wait, and concludes that no qualifying confirmation arrived. A second worker checks the Sent folder and inbox, then reaches the same conclusion.}
\casefield{Recorded outcome}{\textbf{Invalid}. The evaluator identifies missing cab booking, group creation, and group messaging, as well as an email-operation count mismatch.}
\casefield{Execution structure}{\ourmethod{}: one worker and three revisions. \aorchestra{}: two workers and no within-worker revisions.}
}
\end{tcolorbox}
\captionof{figure}{Proactive task revisions under new notifications on GAIA2.}
\label{fig:gaia2-email-case}
\endgroup

\begingroup\interlinepenalty=10000
Figure~\ref{fig:gaia2-email-case} shows how proactive communication adapts an active delegation to new notifications. After completing the reservation setup, the worker reports successive messages and requests guidance on the conditional follow-up. Through three revisions, the orchestrator extends the task and tools, distinguishes unrelated replies from relevant confirmation, and directs verification of Deniz's response. The same worker then books the cab and sends the group message, carrying forward the earlier event, email, and contact information.
\par\endgroup

% Do not substitute oracle event times for agent-visible input.
% Main-agent deadline estimates differ from the timestamp of the initial send action.
% Source evidence is limited to visible notifications, model inputs/outputs, and result histories;
% validation rationale is used only to describe the final evaluation outcome.

%% file: appendix/cases/opod_cases.tex
% Sources: opd_case_23_5r4w6u_complete_traces_20260923.zip and
% gaia1_opd_two_examples_20260923.zip. Hints below reproduce saved outputs.
% Runtime revisions are observed actions, not self-teacher generations.

\subsubsection{Context Allocation Optimization}
% Source metadata: Figure~\ref{fig:opod-context-case} presents scenario \texttt{scenario\_universe\_23\_5r4w6u}. The displayed sequence numbers refer to the saved model-call log. The case connects a missing context handoff, a proactive request, and an attribution hint targeting the original delegation.

\begingroup
\input{appendix/cases/case_style}
\begin{tcolorbox}[breakable,colback=white,colframe=black!35,arc=1mm,
title={GAIA2 OPOD case: context allocation optimization},
title after break={GAIA2 OPOD case (continued)},
colbacktitle=arporow,coltitle=black,fonttitle=\bfseries,
boxsep=4pt,left=5pt,right=5pt,top=5pt,bottom=5pt]
\small\setlength{\parskip}{4pt}
\casefield{Relevant task objective}{Identify the postal code with the lowest property crime rate across the apartment catalog, save all lofts in that postal code, and report the saved properties to the user.}
\casestep{myblue}{myblue}{Communication}{1. Orchestrator: Delegation with incomplete context}{%
\casefield{Action}{\casecomm{delegate\_task}}
\casefield{Task input}{Query the remaining 12 postal codes, compare rates across \textbf{all catalog postal codes}, and save the lofts in the global minimum.}
\casefield{Context excerpt}{``Property crime rates have been queried for many zip codes already.''}
\casefield{Missing handoff}{The orchestrator already has rates for 18 postal codes, including \texttt{170 00 = 3.44}, but does not include those values in the delegation.}
}
\casestep{mypurple}{caseenv}{Environment interaction}{2. Sub-agent: Query the remaining postal codes}{%
\casefield{Actions}{\casetool{City\_\_get\_crime\_rate}, with API-count and limit checks.}
\casefield{Observed result}{All 12 remaining rates are collected. The lowest within this subset is \texttt{500 01 = 12.17}. The earlier 18 values remain unavailable in the worker's context.}
}
\casestep{mypurple}{myblue}{Communication to Orchestrator}{3. Sub-agent: Request the missing evidence}{%
\casefield{Action}{\casecomm{revise}; keep the task scope and request \texttt{context}.}
\casefield{\textcolor{myblue}{Status}}{The remaining queries are complete, but the global minimum cannot be confirmed.}
\casefield{\textcolor{myblue}{Request}}{Provide the previously queried postal codes and property crime rates.}
\casefield{\textcolor{myblue}{Evidence}}{The local minimum is \texttt{500 01 = 12.17}; selecting it without the earlier results could lead to saving lofts in the wrong postal code.}
}
\casestep{myblue}{myblue}{Communication to Sub-agent}{4. Orchestrator: Observed runtime repair}{%
\casefield{Action}{\casecomm{revise\_task}}
\casefield{Context update}{Supply all 30 rates, including \texttt{170 00 = 3.44}, and instruct the same worker to compare them before searching for and saving lofts.}
\casefield{Observed continuation}{The worker identifies \texttt{170 00}, saves \textbf{Prague 7 Loft}, and reports the result. The recorded scenario evaluation is \textcolor{caseenv}{\textbf{Valid}}.}
}
\casestep{mypurple}{mypurple}{Training signal}{5. Attribution: Repair the original delegation}{%
\casefield{Target}{The original delegation; $\texttt{FLAG}_t=\texttt{true}$.}
\casefield{Natural-language gradient $\mathcal H_t$}{``Include the previously queried zip codes and their property crime rates in the delegation context so the SubAgent can determine the global minimum without needing to request additional data.''}
\casefield{Training record}{The saved OPD sample pairs the original decision context with $\mathcal H_t$.}
}
\end{tcolorbox}
\captionof{figure}{A missing evidence handoff yields a context-repair gradient for OPOD.}
\label{fig:opod-context-case}
\endgroup

Figure~\ref{fig:opod-context-case} shows how OPOD generates supervision for context allocation at delegation. When constructing the sub-agent's context, the orchestrator omits previously collected crime rates, leaving the sub-agent unable to compare all postal codes after completing its queries. The proactive request identifies information that was available at decision time but not passed to the sub-agent. Runtime revision supplies the missing data, while OPOD attributes the omission to the original delegation and generates a gradient to include those results when creating the sub-agent. Thus, even a successful trajectory can provide supervision for improving delegation.

\subsubsection{Tool Configuration Optimization}
% Source metadata: Figure~\ref{fig:opod-archive-case} presents \texttt{taskcraft\_12822}, a TaskCraft sample used to train the GAIA1 orchestrator, with a Qwen-SFT main agent and a \modelname{DeepSeek-V3.2} sub-agent. The delegation already follows a failed live-page retrieval, making archival access relevant at decision time.

\begingroup
\input{appendix/cases/case_style}
\begin{tcolorbox}[breakable,colback=white,colframe=black!35,arc=1mm,
title={TaskCraft OPOD case: tool configuration optimization},
title after break={TaskCraft OPOD case (continued)},
colbacktitle=arporow,coltitle=black,fonttitle=\bfseries,
boxsep=4pt,left=5pt,right=5pt,top=5pt,bottom=5pt]
\small\setlength{\parskip}{4pt}
\casefield{User task (condensed)}{Find the update date of an article on \texttt{academiascribes.com} comparing Claude and ChatGPT for academic writing.}
\casestep{myblue}{myblue}{Communication}{1. Orchestrator: Delegate archive retrieval}{%
\casefield{Action}{\casecomm{delegate\_task}}
\casefield{Known context}{The suspected article URL returns \textbf{404}; the homepage appears repurposed.}
\casefield{Task input}{Search Wayback Machine archives, caches, and other pages for the article's update date.}
\casefield{Assigned tools}{\casetool{GoogleSearchAction}, \casetool{ExtractUrlContentAction}.}
}
\casestep{mypurple}{caseenv}{Environment interaction}{2. Sub-agent: Search and inspect the missing page}{%
\casefield{Actions}{Search the target URL with cache and archive terms; inspect the article using \casetool{ExtractUrlContentAction}.}
\casefield{Observed result}{The page returns \textbf{404 Not Found}. Archive searches yield no usable snapshot, and broader results concern other websites.}
}
\casestep{mypurple}{myblue}{Communication to Orchestrator}{3. Sub-agent: Request direct archival access}{%
\casefield{Action}{\casecomm{revise}; request a tool resource.}
\casefield{\textcolor{myblue}{Status}}{The article is unavailable, and search-based archive discovery has not found the required date.}
\casefield{\textcolor{myblue}{Request}}{Enable direct Wayback Machine API queries and retrieval of the archive URLs they return.}
\casefield{\textcolor{myblue}{Evidence}}{Searches return unrelated pages or no relevant results; direct archival queries may locate snapshots absent from search-engine results.}
}
\casestep{mypurple}{mypurple}{Training signal}{4. Attribution: Repair the delegated tool set}{%
\casefield{Target}{The original \casecomm{delegate\_task}; $\texttt{FLAG}_t=\texttt{true}$.}
\casefield{Natural-language gradient $\mathcal H_t$}{``Enable ExecuteCodeAction in the delegation to allow the SubAgent to programmatically query the Wayback Machine API or construct direct archive URLs, as GoogleSearchAction proved insufficient for locating the specific archive snapshot.''}
\casefield{Training record}{The original decision context and $\mathcal H_t$ appear in the saved OPD training sample.}
\casefield{Recorded task outcome}{\textbf{Timeout / Invalid}.}
}
\end{tcolorbox}
\captionof{figure}{A proactive tool request supports a gradient for archive-retrieval delegation.}
\label{fig:opod-archive-case}
\endgroup

Figure~\ref{fig:opod-archive-case} shows how OPOD generates supervision for tool configuration at delegation. When creating the sub-agent, the orchestrator assigns only search and extraction tools, failing to account for the need to retrieve the article's update date from archives after direct page access has failed. After archive searches fail, the sub-agent requests direct archival API access. OPOD attributes this resource mismatch to the delegation and generates a gradient to include \texttt{ExecuteCodeAction}, retaining the retrieval objective while improving the assigned tool set. This supervision encourages the orchestrator to configure tools suited to each sub-agent's task and known execution constraints.

\subsubsection{Revision Policy Optimization}
% Source metadata: Figure~\ref{fig:opod-domain-case} presents \texttt{taskcraft\_13294}, a TaskCraft sample used to train the GAIA1 orchestrator, with a Qwen-SFT main agent and a \modelname{DeepSeek-V4-Flash} sub-agent. Here the target of attribution is a revision that introduces an unsupported website hypothesis.

\begingroup
\input{appendix/cases/case_style}
\begin{tcolorbox}[breakable,colback=white,colframe=black!35,arc=1mm,
title={TaskCraft OPOD case: revision policy optimization},
title after break={TaskCraft OPOD case (continued)},
colbacktitle=arporow,coltitle=black,fonttitle=\bfseries,
boxsep=4pt,left=5pt,right=5pt,top=5pt,bottom=5pt]
\small\setlength{\parskip}{4pt}
\casefield{User task (condensed)}{Find an AI-writing-assistant comparison in an Academic Writing category and report its account of ChatGPT's development origins and current versions.}
\casestep{myblue}{myblue}{Communication to Sub-agent}{1. Orchestrator: Supply a speculative domain}{%
\casefield{Decision context}{The active worker has requested context and clues to locate the official website.}
\casefield{Action}{\casecomm{revise\_task}; retain search and URL extraction, with \texttt{step\_budget = 15}.}
\casefield{Context excerpt}{``The official website is likely `academic-writing.org' or a similar site. Please search specifically using \texttt{site:academic-writing.org} combined with keywords like `selecting', `AI writing assistant', or `ChatGPT'.''}
}
\casestep{mypurple}{caseenv}{Environment interaction}{2. Sub-agent: Follow the supplied lead}{%
\casefield{Action}{\casetool{GoogleSearchAction}}
\casefield{Input query}{\texttt{site:academic-writing.org "AI writing assistant" ChatGPT}}
\casefield{Observation}{``No good Google Search Result was found''. Further searches and page inspection follow.}
\casefield{Action}{\casetool{ExtractUrlContentAction}, with URL \url{https://academic-writing.org}.}
\casefield{Observation}{The extracted page describes a human essay-writing service and provides no relevant information about the requested AI-writing-assistant article.}
}
\casestep{mypurple}{mypurple}{Training signal}{3. Attribution: Correct the revision context}{%
\casefield{Target}{The original \casecomm{revise\_task}; $\texttt{FLAG}_t=\texttt{true}$.}
\casefield{Natural-language gradient $\mathcal H_t$}{``Avoid providing unverified specific domain names in the context during revisions. Instead, instruct the SubAgent to search for the article title and keywords to identify the correct official website, preventing wasted steps on incorrect domains.''}
\casefield{Training record}{The original decision context and $\mathcal H_t$ appear in the saved OPD training sample.}
\casefield{Recorded task outcome}{\textbf{Timeout / Invalid}.}
}
\end{tcolorbox}
\captionof{figure}{Execution evidence identifies unsupported context introduced by a revision.}
\label{fig:opod-domain-case}
\endgroup

Figure~\ref{fig:opod-domain-case} shows how OPOD also supervises the orchestrator's revision policy. In response to a request for help, the orchestrator supplies an unverified domain that directs the worker toward an irrelevant website. Subsequent search and extraction expose this mismatch. OPOD attributes the problem to the revision and generates a gradient to establish the website from article titles and keywords before supplying domain-specific guidance, providing supervision for how the orchestrator updates an active worker's context.

%% file: appendix/training_implementation.tex
% Sources: gaia1_paper_handoff_20260923.tar.gz and
% gaia2_paper_details_20260923_lite.zip; see the accompanying source notes.
\begingroup
\raggedbottom

\subsubsection{Base Model and Warm-Start Stage}
\label{app:sft-settings}
We initialize the trainable orchestrator from \modelname{Qwen3.5-9B} and perform SFT separately for Orchestra-o1, \aorchestra{}, and \ourmethod{} on each benchmark. Expert trajectories use \modelname{Gemini-3-Flash} as the orchestrator. For GAIA, we collect one pass over the 2,339 TaskCraft tasks with each of two execution backbones, \modelname{DeepSeek-V3.2} and \modelname{DeepSeek-V4-Flash}. For GAIA2, we collect four passes over the 512 training scenarios with \modelname{DeepSeek-V4-Flash} as the executor. Table~\ref{tab:sft-data} reports the decision records retained for SFT.

\begin{table}[H]
\centering
\small
\setlength{\tabcolsep}{5pt}
\setlength{\belowcaptionskip}{6pt}
\renewcommand{\arraystretch}{1.12}
% Center the multi-line caption locally for this table.
\makeatletter
\long\def\@makecaption#1#2{%
  \vskip\abovecaptionskip
  {\centering #1: #2\par}%
  \vskip\belowcaptionskip}
\makeatother
\caption{SFT data by benchmark and orchestration framework. Task pools and collection passes precede filtering; decision counts refer to the retained examples.}
\label{tab:sft-data}
\begin{tabular}{@{}clccc@{}}
\toprule
\textbf{Benchmark} & \textbf{Framework} & \textbf{Task pool} & \textbf{Passes} & \textbf{Train decisions} \\
\midrule
\multirow{3}{*}{GAIA} & Orchestra-o1 & 2,339 & 2 & 11,746 \\
& \aorchestra{} & 2,339 & 2 & 9,977 \\
& \ourmethod{} & 2,339 & 2 & 9,999 \\
\midrule
\multirow{3}{*}{GAIA2} & Orchestra-o1 & 512 & 4 & 5,276 \\
& \aorchestra{} & 512 & 4 & 4,087 \\
& \ourmethod{} & 512 & 4 & 4,539 \\
\bottomrule
\end{tabular}
\end{table}

SFT uses structurally valid orchestration decisions from both successful and unsuccessful expert trajectories. Subsequent optimization starts from the corresponding framework's SFT checkpoint; sub-agent models remain fixed.

\subsubsection{Trajectory Collection and Optimization Data}
\label{app:rollout-settings}
For RWR and OPOD, we collect trajectories from each framework's SFT policy using \modelname{DeepSeek-V4-Flash} as the executor, with one pass over the TaskCraft training set for GAIA and four passes over the training scenarios for GAIA2. Successful trajectories provide RWR targets, while OPOD additionally uses feedback from both successful and unsuccessful trajectories for OPD supervision. The two objectives are paired by decision context. During optimization, the stored contexts and hints remain fixed while the current student resamples actions for OPD. DA-GRPO samples and scores candidate actions at expert decision contexts.

\subsubsection{Optimization Hyperparameters}
\label{app:optimization-settings}
\begin{table}[H]
\centering
\small
\setlength{\tabcolsep}{5pt}
\setlength{\belowcaptionskip}{6pt}
\renewcommand{\arraystretch}{1.12}
\caption{Main SFT and joint OPOD training settings.}
\label{tab:training-configuration}
\begin{tabular}{@{}L{0.34\textwidth}L{0.28\textwidth}L{\dimexpr0.38\textwidth-4\tabcolsep\relax}@{}}
\toprule
\textbf{Parameter} & \textbf{SFT} & \textbf{OPOD} \\
\midrule
Parameter update & Full fine-tuning & LoRA, $r=32$, $\alpha=64$ \\
LoRA dropout & --- & 0.05 \\
Learning rate & $10^{-5}$ & $10^{-5}$ \\
Schedule / warmup ratio & Cosine / 0.05 & Cosine / 0.05 \\
Effective batch size & 16 & 16 \\
Per-device batch / accumulation & 1 / 2 & 1 / 2 \\
Epochs & 3 & 1 \\
Sequence-length setting & 32,768 tokens & 32,768 tokens \\
Generated action limit & --- & 1,024 tokens \\
Student sampling & --- & $T=1$, top-$p=0.95$, top-$k=50$ \\
Precision & BF16 & BF16; FP32 KL computation \\
Hardware & $8\times$ A800 80GB & $8\times$ A800 80GB \\
\bottomrule
\end{tabular}
\end{table}

Table~\ref{tab:training-configuration} summarizes the main training hyperparameters for SFT and OPOD. We implement SFT with LLaMA-Factory~\citep{zheng2024llamafactory} and OPOD with a custom trainer built on Hugging Face TRL~\citep{vonwerra2020trl}.

\paragraph{DA-GRPO.}
We initialize DA-GRPO from the corresponding Orchestra-o1 SFT checkpoint and train for one epoch, using batch size 24 and eight candidate actions per decision context. GAIA uses full fine-tuning with a learning rate of $5\times10^{-6}$, while GAIA2 uses LoRA with rank 32 and a learning rate of $10^{-5}$. Both settings use \modelname{DeepSeek-V3.2} as the rubric reward model.

\subsubsection{OPOD Loss Implementation}
\label{app:opod-implementation}
We use the frozen base \modelname{Qwen3.5-9B} model $M_{\theta_0}$, without task-specific SFT, to generate attribution flags and natural-language gradients, computing the OPD loss only for decisions with $\texttt{FLAG}=\texttt{true}$. To promote training stability, we fix the teacher to $M_{\theta_{\mathrm{SFT}}}$ throughout OPOD training, following on-policy self-distillation~\citep{zhao2026selfdistilled}. The student $M_\theta$ is initialized with $\theta=\theta_{\mathrm{SFT}}$ and is the only model updated during OPOD. The teacher receives feedback added to the original decision context. After the current student samples an action, the student and teacher score the same action under their respective contexts to compute token-level reverse KL over the full vocabulary, with gradients updating only the student. RWR computes cross-entropy on decisions from successful trajectories. Both losses average output-token losses within each decision, then across minibatch decisions with the corresponding supervision.

\subsection{Inference and Evaluation Configuration}
\label{app:inference-settings}
Table~\ref{tab:experiment-models} distinguishes the model settings for inference, training evaluation, and mechanism analysis. The API model identifiers are \texttt{gemini-3-flash-preview}, \texttt{deepseek-v3.2}, and \texttt{deepseek-v4-flash}. Single-agent baselines use the execution backbone directly.

\begin{table}[H]
\centering
\small
\setlength{\tabcolsep}{5pt}
\setlength{\belowcaptionskip}{6pt}
\renewcommand{\arraystretch}{1.12}
\caption{Model configurations for the main experiments and additional analyses.}
\label{tab:experiment-models}
\begin{tabular}{@{}>{\raggedright\arraybackslash}m{0.30\textwidth}>{\raggedright\arraybackslash}m{0.225\textwidth}>{\raggedright\arraybackslash}m{0.255\textwidth}>{\raggedright\arraybackslash}m{\dimexpr0.22\textwidth-6\tabcolsep\relax}@{}}
\toprule
\textbf{Experiment} & \textbf{Orchestrator} & \textbf{Executor} & \textbf{Evaluation} \\
\midrule
Inference comparison & \modelname{Gemini-3-Flash} & \modelname{Gemini-3-Flash},\newline \modelname{DeepSeek-V3.2},\newline \modelname{DeepSeek-V4-Flash} & GAIA, GAIA2, SWE-Bench \\
\midrule
Orchestrator optimization & \modelname{Qwen3.5-9B} & \modelname{DeepSeek-V4-Flash} & GAIA, GAIA2 \\
\midrule
Intervention / triggering / polling & \modelname{Gemini-3-Flash} & \modelname{DeepSeek-V4-Flash} & GAIA2 \\
\midrule
OPD-weight analysis & \modelname{Qwen3.5-9B} & \modelname{DeepSeek-V4-Flash} & GAIA2 \\
\midrule
Harness revision ablation & \modelname{Gemini-3-Flash} & \modelname{DeepSeek-V4-Flash} & GAIA2 \\
\bottomrule
\end{tabular}
\end{table}

\paragraph{Execution details.}
We serve \modelname{Qwen3.5-9B} with vLLM~\citep{kwon2023efficient} using a 32,768-token context window. Thinking is disabled for trained orchestrators and enabled for the untuned Qwen baseline on GAIA2. For GAIA and GAIA2, we set a total execution budget of 300 steps and allow the creation of up to 10 sub-agents, each with a default budget of 30 steps. For SWE-Bench, the total budget is 500 steps, with a default budget of 50 steps per sub-agent. Baselines use the same total execution budget on each benchmark.

\paragraph{Evaluation Protocol.}
For the results reported in Table~\ref{tab:optimization}, we evaluate the two best-performing methods in each task category twice and report their mean success rates; the remaining methods are evaluated once.

\paragraph{API access and cost units.}
All external model calls use the ChatAnywhere API\footnote{\url{https://chatanywhere.tech/}}. We report API costs in CA, with an approximate conversion of $1\,\mathrm{CA}\approx0.14\,\mathrm{USD}$.

\endgroup

%% file: appendix/core_prompts.tex
% Adapted from the GAIA, GAIA2, and SWE-Bench prompt handoff archives.
% Long runtime inputs and repeated examples are represented by placeholders.
% The delegation-scope restriction in the attribution source is intentionally omitted.
\begingroup
\raggedbottom

The following templates retain the core instructions, decision rules, and output formats, with placeholders for long runtime inputs and repeated examples. The highlighted block in the GAIA and GAIA2 main-agent prompts inserts $\mathcal H_t$ for the training-time self-teacher only.

\newtcolorbox{coreprompt}[2]{%
  breakable,colback=white,colframe=#2!50,arc=1mm,
  title={#1},title after break={#1 (continued)},
  colbacktitle=#2!10,coltitle=black,fonttitle=\bfseries,
  fontupper=\small\raggedright,boxsep=3pt,left=5pt,right=5pt,top=4pt,bottom=4pt,
  before skip=7pt,after skip=8pt}
\newcommand{\prompthead}[1]{\par\vspace{4pt}\noindent\textbf{#1}\par\nopagebreak}
\newcommand{\promptfield}[2]{\par\noindent\textbf{#1:} #2\par}
\newcommand{\promptslot}[1]{\textcolor{black!65}{\texttt{\textless#1\textgreater}}}
\newenvironment{promptrules}{\begin{itemize}[leftmargin=1.3em,itemsep=1pt,parsep=0pt,topsep=2pt]}{\end{itemize}}
\newenvironment{promptcode}{\par\vspace{2pt}\begingroup\ttfamily\small\setlength{\parindent}{0pt}\setlength{\parskip}{0pt}}{\par\endgroup}
\newcommand{\pc}[1]{\noindent #1\par}
\newcommand{\prompthint}{%
  \begin{tcolorbox}[colback=mypurple!6,colframe=mypurple!35,arc=0.5mm,
    boxsep=2pt,left=4pt,right=4pt,top=2pt,bottom=2pt,
    before skip=4pt,after skip=4pt]
  \textcolor{mypurple}{\textbf{Hint / Instruction [self-teacher only]}}\par
  $\mathcal H_t$: \promptslot{natural-language gradient for this decision}
  \end{tcolorbox}}

\subsection{Orchestrator Prompts}
\label{app:orchestrator-prompt}
\label{app:self-teacher-prompt}

\begin{coreprompt}{GAIA: Main Agent}{myblue}
You are the MainAgent in \ourmethod{}. Coordinate SubAgents to solve the QUESTION.

\prompthead{Control semantics and decision policy}
\begin{promptrules}
\item \texttt{delegate\_task} starts a NEW SubAgent attempt. Give it a specific, actionable subtask, relevant context, an available model, and the tools needed to execute it.
\item \texttt{revise\_task} updates the CURRENT active SubAgent and keeps the same attempt alive. Inspect its state, requested adjustment, and supporting evidence before deciding.
\item When revising, determine whether to keep, reduce, expand, remove, or decompose the task, and whether context, tools, budget, constraints, or state handoff are needed. Provide a concrete adjustment or corrected solution path.
\item Attempts are limited. After a genuine \texttt{stop}, use evidence from the abandoned path to start a new attempt. Revision requests normally call for adjusting the active worker.
\item Use working memory as the compact interaction state. When verified evidence is sufficient, use \texttt{complete} with a concise final answer to the original question.
\end{promptrules}

\prompthead{Input context}
\promptfield{Question}{\promptslot{original question and attachment information}}
\promptfield{Progress}{\promptslot{attempt index; remaining attempts; active SubAgent}}
\promptfield{Working memory}{\promptslot{prior structured communication reports}}
\promptfield{Resources}{\promptslot{available models; tool descriptions and schemas}}

\prompthint
\prompthead{Output}
Return ONLY one JSON object. Use one of the following action formats.
\begin{promptcode}
\pc{\{"action": "delegate\_task", "reasoning": "\textless{}why more work\textgreater{}",}
\pc{\hspace*{0.525em}"params": \{}
\pc{\hspace*{1.5750000000000002em}"task\_instruction": "\textless{}specific, actionable subtask\textgreater{}",}
\pc{\hspace*{1.5750000000000002em}"context": "\textless{}relevant prior findings and evidence\textgreater{}",}
\pc{\hspace*{1.5750000000000002em}"model": "\textless{}available model\textgreater{}", "tools": ["\textless{}tool\_name\textgreater{}"]}
\pc{\hspace*{0.525em}\}\}}
\end{promptcode}
\begin{promptcode}
\pc{\{"action": "revise\_task", "reasoning": "\textless{}why revise\textgreater{}",}
\pc{\hspace*{0.525em}"params": \{}
\pc{\hspace*{1.5750000000000002em}"task\_scope": \{}
\pc{\hspace*{2.625em}"operation": "keep|reduce|expand|remove|decompose",}
\pc{\hspace*{2.625em}"object": "\textless{}task object\textgreater{}", "details": "\textless{}scope change\textgreater{}"\},}
\pc{\hspace*{1.5750000000000002em}"resource": \{}
\pc{\hspace*{2.625em}"operation": "keep|request",}
\pc{\hspace*{2.625em}"object": "context|tool|budget|constraint|state\_handoff|none",}
\pc{\hspace*{2.625em}"details": "\textless{}resource adjustment\textgreater{}"\},}
\pc{\hspace*{1.5750000000000002em}"context": "\textless{}additional findings or guidance\textgreater{}",}
\pc{\hspace*{1.5750000000000002em}"tools": ["\textless{}tool\_name\textgreater{}"], "step\_budget": 10,}
\pc{\hspace*{1.5750000000000002em}"instruction": "\textless{}revised instruction, if needed\textgreater{}"}
\pc{\hspace*{0.525em}\}\}}
\end{promptcode}
\begin{promptcode}
\pc{\{"action": "complete", "reasoning": "\textless{}supporting evidence\textgreater{}",}
\pc{\hspace*{0.525em}"params": \{"answer": "\textless{}concise final answer\textgreater{}"\}\}}
\end{promptcode}
Here, \texttt{step\_budget} supplies additional steps to the active attempt.
\end{coreprompt}

\clearpage
\begin{coreprompt}{GAIA2: Main Agent}{myblue}
You are the MainAgent in \ourmethod{} for GAIA2. Solve the current turn by coordinating one active SubAgent at a time. GAIA2 is a stateful application environment: correctness depends on exact side effects, including messages, calendar events, files, and user-facing responses.

\prompthead{Decomposition principles}
\begin{promptrules}
\item Delegate local, verifiable subtasks with a clear objective, temporal scope, permitted writes, and stopping condition. Assign the minimal tools needed.
\item If the execution path depends on unknown observations, specify the objective and constraints; let the SubAgent determine the concrete steps from tool results.
\end{promptrules}
\prompthead{Important rules}
\begin{promptrules}
\item Do not call application tools directly. Use \texttt{delegate\_task} to start a worker and \texttt{revise\_task} to adjust the active worker while preserving its progress.
\item The original task is the persistent contract; later notifications supplement it. Preserve exact entities, times, conditions, and required message content in assignments.
\item Avoid extra side effects. Create only required files; keep internal state in working memory.
\item Assign \texttt{SystemApp\_\_wait\_for\_notification} when waiting is needed and \texttt{AgentUserInterface\_\_send\_message\_to\_user} for required user reports. Do not guess future events; delegate observation and waiting.
\item Treat \texttt{follow\_up} reports as requests to synchronize external information. Inspect the update and respond through \texttt{revise\_task}.
\item Use \texttt{finalize\_task} only after all required work and side effects are complete. Its message is sent to the user by the runner.
\end{promptrules}

\prompthead{Input context}
\promptfield{Original task}{\promptslot{persistent user requirements}}
\promptfield{Updates}{\promptslot{notification history; reported follow-up events}}
\promptfield{State}{\promptslot{active SubAgent; working memory; remaining budget}}
\promptfield{Tools}{\promptslot{available application tools and argument schemas}}

\prompthint
\prompthead{Output}
Return ONLY one JSON object using one of these action formats.
\begin{promptcode}
\pc{\{"action": "delegate\_task", "reasoning": "\textless{}why delegate\textgreater{}",}
\pc{\hspace*{0.525em}"params": \{}
\pc{\hspace*{1.5750000000000002em}"task\_instruction": "\textless{}executable subtask and stopping condition\textgreater{}",}
\pc{\hspace*{1.5750000000000002em}"context": "\textless{}findings, permitted writes, and constraints\textgreater{}",}
\pc{\hspace*{1.5750000000000002em}"tools": ["\textless{}App\_\_tool\_name\textgreater{}"], "step\_budget": 30\}\}}
\end{promptcode}
\begin{promptcode}
\pc{\{"action": "revise\_task", "reasoning": "\textless{}why continue\textgreater{}",}
\pc{\hspace*{0.525em}"params": \{}
\pc{\hspace*{1.5750000000000002em}"task\_scope": \{}
\pc{\hspace*{2.625em}"operation": "keep|reduce|expand|remove|decompose|follow\_up",}
\pc{\hspace*{2.625em}"object": "\textless{}task object\textgreater{}", "details": "\textless{}scope change\textgreater{}"\},}
\pc{\hspace*{1.5750000000000002em}"resource": \{}
\pc{\hspace*{2.625em}"operation": "keep|request",}
\pc{\hspace*{2.625em}"object": "context|tool|budget|constraint|state\_handoff|none",}
\pc{\hspace*{2.625em}"details": "\textless{}resource adjustment\textgreater{}"\},}
\pc{\hspace*{1.5750000000000002em}"context": "\textless{}guidance\textgreater{}", "tools": ["\textless{}App\_\_tool\_name\textgreater{}"],}
\pc{\hspace*{1.5750000000000002em}"step\_budget": 10, "instruction": "\textless{}revised subtask\textgreater{}"\}\}}
\end{promptcode}
\begin{promptcode}
\pc{\{"action": "finalize\_task", "reasoning": "\textless{}all requirements met\textgreater{}",}
\pc{\hspace*{0.525em}"params": \{"message": "\textless{}final user-facing message\textgreater{}"\}\}}
\end{promptcode}
\end{coreprompt}

\clearpage
\begin{coreprompt}{SWE-Bench: Main Agent}{myblue}
You are the MainAgent for \ourmethod{} on SWE-Bench. Use execution evidence to manage SubAgents solving the GitHub issue. Only SubAgents inspect files, edit code, run commands and tests.

\prompthead{Control semantics}
\begin{promptrules}
\item \texttt{delegate\_task} starts a NEW attempt in a FRESH container, discarding unsubmitted code and local state from the previous attempt.
\item \texttt{revise\_task} preserves the CURRENT container and code changes. With \texttt{executor.operation=keep}, continue the same worker and memory; with \texttt{replace}, start a fresh worker context over the same workspace.
\item \texttt{submit} runs the official grader on the current container.
\end{promptrules}
\prompthead{Decision policy}
\begin{promptrules}
\item Initially delegate the issue as an actionable engineering task. On a \texttt{revise} report, inspect the request and evidence; normally keep the same worker.
\item Submit a completed patch supported by relevant tests. Do not replace the worker merely for optional review, alternative implementations, or repeating passing tests.
\item All attempts and revisions share the global step budget. Do not request more steps than remain; when no steps remain, submit the current container.
\item Historical issue discussions do not override the latest evidence-backed report unless the worker identifies a concrete unresolved problem.
\end{promptrules}

\prompthead{Input context}
\promptfield{Issue}{\promptslot{GitHub issue; supplied hints; repository metadata}}
\promptfield{Progress}{\promptslot{remaining attempts; used and remaining global steps}}
\promptfield{State}{\promptslot{active worker; working memory and communication evidence}}
\promptfield{Models}{\promptslot{available execution models}}

\prompthead{Output}
Return ONLY one JSON object. No prose, Markdown, or code fences.
\begin{promptcode}
\pc{\{"action": "delegate\_task", "reasoning": "\textless{}why a fresh attempt\textgreater{}",}
\pc{\hspace*{0.525em}"params": \{}
\pc{\hspace*{1.5750000000000002em}"task\_instruction": "\textless{}actionable software-engineering task\textgreater{}",}
\pc{\hspace*{1.5750000000000002em}"context": "\textless{}relevant context and prior evidence\textgreater{}",}
\pc{\hspace*{1.5750000000000002em}"model": "\textless{}available model\textgreater{}", "tools": []\}\}}
\end{promptcode}
\begin{promptcode}
\pc{\{"action": "revise\_task", "reasoning": "\textless{}why this update\textgreater{}",}
\pc{\hspace*{0.525em}"params": \{}
\pc{\hspace*{1.5750000000000002em}"executor": \{"operation": "keep|replace",}
\pc{\hspace*{8.4em}"model": "\textless{}optional model for replacement\textgreater{}"\},}
\pc{\hspace*{1.5750000000000002em}"task\_scope": \{}
\pc{\hspace*{2.625em}"operation": "keep|reduce|expand|remove|decompose",}
\pc{\hspace*{2.625em}"object": "\textless{}task object\textgreater{}", "details": "\textless{}scope change\textgreater{}"\},}
\pc{\hspace*{1.5750000000000002em}"resource": \{}
\pc{\hspace*{2.625em}"operation": "keep|request",}
\pc{\hspace*{2.625em}"object": "context|tool|budget|constraint|state\_handoff|none",}
\pc{\hspace*{2.625em}"details": "\textless{}resource adjustment\textgreater{}"\},}
\pc{\hspace*{1.5750000000000002em}"context": "\textless{}guidance, or complete handoff when replacing\textgreater{}",}
\pc{\hspace*{1.5750000000000002em}"tools": [], "step\_budget": 10,}
\pc{\hspace*{1.5750000000000002em}"instruction": "\textless{}revised task; preserve inherited patch\textgreater{}"\}\}}
\end{promptcode}
\begin{promptcode}
\pc{\{"action": "submit", "reasoning": "\textless{}readiness evidence\textgreater{}",}
\pc{\hspace*{0.525em}"params": \{"reason": "\textless{}reason to submit\textgreater{}"\}\}}
\end{promptcode}
\end{coreprompt}

\clearpage
\subsection{Sub-Agent Prompts}
\label{app:sub-agent-prompt}
\label{app:communication-schema}
The status, request, and evidence signals are encoded in the \texttt{state}, \texttt{action}, and \texttt{evidence} fields, respectively. The \texttt{action} field also specifies the communication action type. The implementation action \texttt{finish} corresponds to the sub-agent \texttt{submit} action in our formulation. The templates below retain the benchmark-specific output wrappers.

\begin{coreprompt}{GAIA: Sub-Agent}{mypurple}
You are a specialized SubAgent in \ourmethod{}. Use the available tools to complete your task.

\prompthead{Assignment and execution context}
\promptfield{Progress}{\promptslot{current step; remaining steps; budget warning}}
\promptfield{Your task}{\promptslot{subtask assigned by MainAgent}}
\promptfield{Context}{\promptslot{MainAgent context; original question}}
\promptfield{Available tools}{\promptslot{assigned tools and argument schemas}}
\promptfield{Local state}{\promptslot{memory; current tool observation}}

\prompthead{Execution and communication rules}
\begin{promptrules}
\item Focus on the assigned task; address the whole question only when necessary. Use tools until you can make a useful handoff or need a task or resource adjustment.
\item Use \texttt{finish} when the assigned subtask is complete; use \texttt{revise} when changing the task or resources would enable progress; use \texttt{stop} when the current path should be abandoned.
\item Request additional budget before exhaustion when continuing the current path is useful. Explain the needed adjustment and why it enables progress.
\item \texttt{state} contains facts: what is found, completed, ruled out, or unresolved. Put requests in \texttt{action.request} and your analysis in \texttt{state} or \texttt{action.reason}.
\item \texttt{evidence.summary} must quote relevant raw tool outputs verbatim, without paraphrasing or adding conclusions. If none exists, write ``No tool output available.''
\item The inner \texttt{action.type} must match the top-level action. Use \texttt{resource.object=none} when no resource change is needed.
\end{promptrules}

\prompthead{Ordinary tool output}
Return ONLY JSON, with one tool action and a brief memory update.
\begin{promptcode}
\pc{\{}
\pc{\hspace*{1.05em}"action": "\textless{}tool\_name\textgreater{}",}
\pc{\hspace*{1.05em}"params": \{"\textless{}argument\textgreater{}": "\textless{}value\textgreater{}"\},}
\pc{\hspace*{1.05em}"memory": "\textless{}key observation\textgreater{}"}
\pc{\}}
\end{promptcode}
\prompthead{Structured communication output}
For \texttt{finish}, \texttt{revise}, or \texttt{stop}, return ONLY JSON:
\begin{promptcode}
\pc{\{}
\pc{\hspace*{1.05em}"action": "finish|revise|stop",}
\pc{\hspace*{1.05em}"params": \{}
\pc{\hspace*{2.1em}"state": "\textless{}concise factual local state\textgreater{}",}
\pc{\hspace*{2.1em}"action": \{}
\pc{\hspace*{3.1500000000000004em}"type": "finish|revise|stop",}
\pc{\hspace*{3.1500000000000004em}"task\_scope": \{}
\pc{\hspace*{4.2em}"operation": "keep|reduce|expand|remove|decompose",}
\pc{\hspace*{4.2em}"object": "\textless{}task object\textgreater{}", "details": "\textless{}scope change\textgreater{}"\},}
\pc{\hspace*{3.1500000000000004em}"resource": \{}
\pc{\hspace*{4.2em}"operation": "keep|request",}
\pc{\hspace*{4.2em}"object": "context|tool|budget|constraint|state\_handoff|none",}
\pc{\hspace*{4.2em}"details": "\textless{}needed resource or no change\textgreater{}"\},}
\pc{\hspace*{3.1500000000000004em}"request": "\textless{}concrete request to MainAgent\textgreater{}",}
\pc{\hspace*{3.1500000000000004em}"reason": "\textless{}why this request is justified\textgreater{}"}
\pc{\hspace*{2.1em}\},}
\pc{\hspace*{2.1em}"evidence": \{"summary": "\textless{}verbatim relevant tool output\textgreater{}"\}}
\pc{\hspace*{1.05em}\},}
\pc{\hspace*{1.05em}"memory": "\textless{}brief local memory\textgreater{}"}
\pc{\}}
\end{promptcode}
\end{coreprompt}

\clearpage
\begin{coreprompt}{GAIA2: Sub-Agent}{mypurple}
You are a specialized SubAgent in \ourmethod{} for GAIA2. Complete only the assigned subtask using the available application tools. Their side effects count for evaluation; do not make extra changes.

\prompthead{Execution and external updates}
\begin{promptrules}
\item If context, tools, constraints, or budget are insufficient, request a revision from MainAgent. Send exact message content supplied by MainAgent verbatim.
\item When a new user or environment notification arrives, report it to MainAgent before further side effects. Use \texttt{revise} with \texttt{task\_scope.operation=follow\_up}.
\item Retain new notifications and relevant prior observations in \texttt{evidence.raw}. Copy IDs, timestamps, message content, and other reusable values exactly, retaining only relevant details.
\end{promptrules}
\prompthead{User communication}
\begin{promptrules}
\item Use verified observations to fill in communication details, preserving relevant entities, dates, locations, conditions, and user intent.
\item \texttt{AgentUserInterface\_\_send\_message\_to\_user} ends an ARE turn. Complete currently authorized side effects first, unless a user reply is required to proceed.
\end{promptrules}
\prompthead{Control actions}
Use \texttt{finish} for a completed subtask or useful handoff; \texttt{revise} for an adjustment to task scope, context, tools, constraints, or budget; and \texttt{stop} for an invalid or abandoned path. Request useful extra steps before exhaustion, with \texttt{resource.object=budget} and a concrete explanation.

\prompthead{Input context}
\promptfield{Assignment}{\promptslot{subtask; MainAgent context; original user task}}
\promptfield{Execution state}{\promptslot{budget; history; observations; notifications}}
\promptfield{Tools}{\promptslot{assigned application APIs and argument schemas}}

\prompthead{Ordinary tool call}
Use exactly one tool call with the following wrapper:
\begin{promptcode}
\pc{Thought: \textless{}purpose of the next action\textgreater{}}
\pc{Action:}
\pc{\{"action": "\textless{}App\_\_tool\_name\textgreater{}",}
\pc{\hspace*{0.525em}"action\_input": \{"\textless{}arg\textgreater{}": "\textless{}value\textgreater{}"\}\}}
\pc{\textless{}end\_action\textgreater{}}
\end{promptcode}
\prompthead{Structured communication control action}
\begin{promptcode}
\pc{Thought: \textless{}current judgment and reason for the handoff\textgreater{}}
\pc{Action:}
\pc{\{}
\pc{\hspace*{1.05em}"action": "finish|revise|stop",}
\pc{\hspace*{1.05em}"action\_input": \{}
\pc{\hspace*{2.1em}"state": "\textless{}concise factual local state\textgreater{}",}
\pc{\hspace*{2.1em}"action": \{}
\pc{\hspace*{3.1500000000000004em}"type": "finish|revise|stop",}
\pc{\hspace*{3.1500000000000004em}"task\_scope": \{}
\pc{\hspace*{4.2em}"operation": "keep|reduce|expand|remove|decompose|follow\_up",}
\pc{\hspace*{4.2em}"object": "\textless{}task object\textgreater{}", "details": "\textless{}scope change\textgreater{}"\},}
\pc{\hspace*{3.1500000000000004em}"resource": \{}
\pc{\hspace*{4.2em}"operation": "keep|request",}
\pc{\hspace*{4.2em}"object": "context|tool|budget|constraint|state\_handoff|none",}
\pc{\hspace*{4.2em}"details": "\textless{}needed resource or no change\textgreater{}"\},}
\pc{\hspace*{3.1500000000000004em}"request": "\textless{}request\textgreater{}", "reason": "\textless{}supporting reason\textgreater{}"}
\pc{\hspace*{2.1em}\},}
\pc{\hspace*{2.1em}"evidence": \{}
\pc{\hspace*{3.1500000000000004em}"summary": "\textless{}key observation supporting this report\textgreater{}",}
\pc{\hspace*{3.1500000000000004em}"raw": \{"\textless{}name\textgreater{}": "\textless{}exact tool input/output value\textgreater{}"\}\}}
\pc{\hspace*{1.05em}\}}
\pc{\}\textless{}end\_action\textgreater{}}
\end{promptcode}
\end{coreprompt}

\clearpage
\begin{coreprompt}{SWE-Bench: Sub-Agent}{mypurple}
You are a \ourmethod{} SubAgent responsible for a local software-engineering task. Complete the GitHub issue subtask assigned by MainAgent. The repository is already checked out in the task container.

\prompthead{Assignment and execution context}
\promptfield{Issue}{\promptslot{original GitHub issue and supplied hints}}
\promptfield{Execution contract}{\promptslot{assigned subtask; MainAgent context}}
\promptfield{Progress}{\promptslot{current step; remaining steps; budget warning}}
\promptfield{Environment}{\promptslot{repository state; assigned ACI/Bash interface}}
\promptfield{Local state}{\promptslot{memory; current command observation}}

\prompthead{Execution and communication policy}
\begin{promptrules}
\item Inspect the repository and existing diff, locate relevant code, implement the fix, and run relevant tests. Preserve useful inherited work and ground progress claims in observed command and test results.
\item Use ordinary ACI or Bash commands while the task and resources are sufficient. Handle ordinary command errors and test failures locally before requesting revision.
\item Use \texttt{finish} when the assigned task is complete and the patch is ready for MainAgent to submit, with relevant test evidence in the handoff.
\item Use \texttt{revise} when MainAgent must adjust task scope, context, tools, constraints, budget, or state handoff. If promising work needs more steps, request them before exhaustion with \texttt{resource.object=budget}.
\item Use \texttt{stop} when the current path should be abandoned. Use \texttt{revise} for incomplete work whose patch or observations are worth preserving.
\item MainAgent handles final submission and evaluation. Never run the official submission command.
\end{promptrules}

\prompthead{Environment action format}
Execute one real ACI or Bash command through the assigned interface. For the text interface, return:
\begin{promptcode}
\pc{DISCUSSION}
\pc{\textless{}concise judgment and purpose of the next command\textgreater{}}
\pc{COMMAND}
\pc{\textless{}one ACI or Bash command\textgreater{}}
\end{promptcode}
For native tools, call \texttt{aci\_command} with \texttt{discussion} and \texttt{command}. Place ACI keywords such as \texttt{open} and \texttt{search\_file} inside \texttt{command}; they are not native tools.

\prompthead{Structured communication control format}
Return the following textual control action under either interface:
\begin{promptcode}
\pc{DISCUSSION}
\pc{\textless{}concise judgment and reason for handing control to MainAgent\textgreater{}}
\pc{COMMAND}
\pc{\textless{}finish|revise|stop\textgreater{} \textless{}single-line structured communication JSON\textgreater{}}
\end{promptcode}
The structured communication JSON has the following fields (expanded here for readability):
\begin{promptcode}
\pc{\{}
\pc{\hspace*{1.05em}"state": "\textless{}completed work, remaining issue, and patch status\textgreater{}",}
\pc{\hspace*{1.05em}"action": \{}
\pc{\hspace*{2.1em}"type": "finish|revise|stop",}
\pc{\hspace*{2.1em}"task\_scope": \{}
\pc{\hspace*{3.1500000000000004em}"operation": "keep|reduce|expand|remove|decompose",}
\pc{\hspace*{3.1500000000000004em}"object": "\textless{}task object\textgreater{}", "details": "\textless{}scope change\textgreater{}"\},}
\pc{\hspace*{2.1em}"resource": \{}
\pc{\hspace*{3.1500000000000004em}"operation": "keep|request",}
\pc{\hspace*{3.1500000000000004em}"object": "context|tool|budget|constraint|state\_handoff|none",}
\pc{\hspace*{3.1500000000000004em}"details": "\textless{}needed resource or no change\textgreater{}"\},}
\pc{\hspace*{2.1em}"request": "\textless{}request\textgreater{}", "reason": "\textless{}supporting reason\textgreater{}"}
\pc{\hspace*{1.05em}\},}
\pc{\hspace*{1.05em}"evidence": \{"summary": "\textless{}observed changes and test outcomes\textgreater{}"\}}
\pc{\}}
\end{promptcode}
Choose one environment or control action per turn. Keep control JSON on one line, with \texttt{action.type} matching the command. Add no text after the command or control payload.
\end{coreprompt}

\clearpage
\subsection{Generative Reward Model Prompt}
\label{app:attribution-prompt}
The implementation fields \texttt{needs\_repair} and \texttt{hint} correspond to $\texttt{FLAG}_t$ and $\mathcal H_t$ in the paper. The hint enters the GAIA and GAIA2 main-agent templates at the highlighted position.

\begin{coreprompt}{Attribution: Evaluating an Orchestration Decision}{myblue}
You are a process reward model that extracts hindsight hints for MainAgent orchestration decisions in a mixed-initiative multi-agent system.

The MainAgent creates a SubAgent by defining its subtask, supplying context, and configuring resources. During execution, the SubAgent may proactively request changes to its task scope, context, tool permissions, or execution budget. MainAgent responds by revising the worker's task contract.

\prompthead{You will receive}
\begin{promptrules}
\item The context available to MainAgent when the target decision was made.
\item The target decision: a delegation creating a SubAgent, or a revision responding to its request.
\item The subsequent SubAgent execution trace, including later revision requests, MainAgent responses, and tool or environment results.
\end{promptrules}
Determine whether the later requests and trace reveal concrete, useful hindsight information that could have helped MainAgent improve the target decision.

\prompthead{Attribution rules}
\begin{promptrules}
\item Evaluate only the specified target decision. Generate a hint only when the later evidence reveals a problem attributable to that decision.
\item Attribute only choices MainAgent could reasonably make from its decision-time context. Do not assume that entities or facts discovered later were already known.
\item A follow-up revision request does not by itself establish that the earlier decision was defective. Likewise, a task failure, timeout, or tool error alone does not identify an orchestration mistake; ground attribution in the decision and the observed execution.
\item For revisions, assess whether MainAgent understood the request, resolved the key blocker, adjusted task scope or resources appropriately, and avoided new execution problems.
\item If the worker proceeds successfully as intended and the trace reveals no concrete problem caused by the target decision, choose false.
\item Do not overthink. If attribution is uncertain or no concrete improvement is supported, choose false.
\end{promptrules}

\prompthead{Improvement hint}
When a repair is supported, give one concrete, actionable improvement to the target decision. Specify what MainAgent should change in its instruction, context, or resource configuration and connect it to the observed problem. Do not merely restate that execution failed or provide the final task answer.

\prompthead{Input context}
\promptfield{Decision-time context}{\promptslot{MainAgent input before the action}}
\promptfield{Target decision}{\promptslot{delegation or revision to evaluate}}
\promptfield{Subsequent trace}{\promptslot{worker actions and observations; proactive reports; MainAgent responses}}

\prompthead{Required output}
Return ONLY JSON, without Markdown or explanation. When a concrete repair is supported:
\begin{promptcode}
\pc{\{}
\pc{\hspace*{1.05em}"needs\_repair": true,}
\pc{\hspace*{1.05em}"hint": "\textless{}one concrete improvement to the target decision\textgreater{}"}
\pc{\}}
\end{promptcode}
Otherwise:
\begin{promptcode}
\pc{\{}
\pc{\hspace*{1.05em}"needs\_repair": false,}
\pc{\hspace*{1.05em}"hint": ""}
\pc{\}}
\end{promptcode}
\end{coreprompt}

\clearpage
\endgroup